\documentclass[reqno]{amsart}

\usepackage[margin=1.25in,heightrounded]{geometry}
\usepackage{amssymb,mathtools}
\usepackage{booktabs}
\usepackage{graphicx}
\usepackage{enumitem}
\usepackage{microtype}
\usepackage{url}
\usepackage{array}
\usepackage{xcolor}
\usepackage{algorithm}
\usepackage{algpseudocode}
\usepackage{hyperref}
\usepackage{cleveref}

\hypersetup{colorlinks=true, linkcolor=blue!50!black, citecolor=blue!50!black, urlcolor=blue!50!black}

\theoremstyle{plain}
\newtheorem{thm}{Theorem}[section]
\newtheorem{prop}[thm]{Proposition}
\newtheorem{lem}[thm]{Lemma}
\newtheorem{cor}[thm]{Corollary}

\theoremstyle{definition}
\newtheorem{defn}[thm]{Definition}

\theoremstyle{remark}

\numberwithin{equation}{section}
\numberwithin{figure}{section}
\numberwithin{table}{section}
\numberwithin{algorithm}{section}

\crefname{thm}{Theorem}{Theorems}
\Crefname{thm}{Theorem}{Theorems}
\crefname{prop}{Proposition}{Propositions}
\Crefname{prop}{Proposition}{Propositions}
\crefname{lem}{Lemma}{Lemmas}
\Crefname{lem}{Lemma}{Lemmas}
\crefname{cor}{Corollary}{Corollaries}
\Crefname{cor}{Corollary}{Corollaries}
\crefname{defn}{Definition}{Definitions}
\Crefname{defn}{Definition}{Definitions}
\crefname{rem}{Remark}{Remarks}
\Crefname{rem}{Remark}{Remarks}
\crefname{algorithm}{Algorithm}{Algorithms}
\Crefname{algorithm}{Algorithm}{Algorithms}
\crefname{figure}{Figure}{Figures}
\Crefname{figure}{Figure}{Figures}
\crefname{table}{Table}{Tables}
\Crefname{table}{Table}{Tables}

\newcommand{\A}{\mathcal{A}}
\newcommand{\M}{\mathcal{M}}
\newcommand{\Kum}{\mathcal{K}}
\newcommand{\Sspace}{\mathcal{S}}
\newcommand{\R}{\mathbb{R}}
\newcommand{\C}{\mathbb{C}}
\newcommand{\Q}{\mathbb{Q}}
\newcommand{\Z}{\mathbb{Z}}
\newcommand{\PP}{\mathbb{P}}
\newcommand{\RP}{\mathbb{RP}}

\DeclareMathOperator{\disc}{Disc}
\DeclareMathOperator{\Res}{Res}
\DeclareMathOperator{\SO}{SO}
\DeclareMathOperator{\GL}{GL}
\DeclareMathOperator{\diag}{diag}
\DeclareMathOperator{\tr}{tr}
\DeclareMathOperator{\Jac}{Jac}
\DeclareMathOperator{\Hess}{Hess}
\DeclareMathOperator{\dist}{dist}
\DeclareMathOperator{\lead}{lead}

\title[Exact Quotients and Certified Refraction]{Exact Quotients of Fresnel--Kummer Surfaces and Certified Biaxial Refraction}

\author[T. Shaska]{T.~Shaska}

\address{Department of Computer Science and Engineering, Oakland University, Rochester, MI, 48309.}

\keywords{Kummer surface, genus-two curve, Fresnel wave surface, biaxial crystal, invariant theory, certified root isolation, invariant learning}

\subjclass[2020]{14H45, 14J28, 14Q10, 78A05, 65H04, 68T07}

\hypersetup{
  pdftitle={Exact Quotients of Fresnel--Kummer Surfaces and Certified Biaxial Refraction},
  pdfauthor={T. Shaska}
}

\begin{document}

\begin{abstract}
The Fresnel wave surface governs the propagation of light in a transparent biaxial crystal.  It is a special Kummer quartic, and we identify it exactly.  Over the complex numbers the wave surface of a crystal is the Kummer surface of the Jacobian of an explicit genus-two curve branched at the signed square roots of the three principal permittivities.  This Jacobian is isogenous, by an isogeny with kernel of order four, to a product of two elliptic curves.  One elliptic curve carries the three permittivities, and the other carries the optic-axis angle.  The physical family is Zariski dense in the locus of genus-two curves with an extra involution, and its automorphism strata are explicit.  The identification instantiates a task-aware quotient, which identifies parameters that differ by a nuisance transformation and carries invariant coordinates and explicit strata.  For biaxial crystals, two ratios of the permittivities form a complete invariant of the wave surface up to rotation and rescaling, and the four real nodes are given in closed form.  At an interface the candidate transmitted waves are the roots of a quartic of exact degree four.  Its real-root count, root order, and repeated-root events are decided by exact algebraic predicates, and along the generic single-node encounters of the paper its discriminant vanishes to second order.  Floating-point solvers drop forward transmitted modes near the optic axes, and the certified solver does not.  On exact equivalence classes, learned models on quotient coordinates are invariant and more accurate than models on raw tensors, while learned root-count predicates fail near the optic axes.
\end{abstract}

\maketitle

\section{Introduction}
\label{sec:intro}

Visual computing operates on families of objects rather than on isolated shapes.  Parametric CAD systems expose design variables, neural models build latent spaces of objects and anatomy, procedural systems generate structured populations, and physically based rendering maps material parameters to appearance.  In each case the representation is a parameter vector $a\in\R^N$ with a decoder $a\mapsto S_a$, and the parameter space is treated as the shape space.  It rarely is.  Distinct parameter vectors often differ by a transformation the application regards as irrelevant: rescaling an implicit equation, permuting principal axes, reparameterizing a curve.  A representation that keeps this redundancy pays for it downstream.  Samples repeat, distances mislead, interpolation depends on the choice of representative, and learned models spend capacity rediscovering a transformation law that was known exactly.

This paper takes the other route.  When an equivalence is known mathematically, we quotient it out exactly and compute in the quotient.  We call the resulting representations \emph{canonical algebraic shape spaces}.  They are quotients $\M=\A/G$ of algebraically structured families with three properties that a generic latent space lacks: the quotient dimension, the equivalence relation, and the exceptional loci are known a priori.  Separating invariants supply canonical coordinates.  Strata record enhanced symmetry and degeneration.  A representative map returns concrete geometry for rendering and simulation.  The quotient is \emph{task-aware}: only transformations the task regards as redundant are removed, and physically meaningful variables such as orientation, absolute scale, and wavelength are retained as explicit coordinates.

The principal instance is chosen so that the framework and a demanding rendering problem test each other on one object.  Fresnel's wave surface governs light propagation in a transparent biaxial crystal.  It is a special Kummer quartic, a tetrahedroid in the classical terminology, and skewon-free linear electromagnetic media produce Fresnel--Kummer quartics in general~\cite{FavaroHehl2014,BaeklerFavaroItinHehl2014,FavaroHehl2016}.  We make the connection exact: over $\C$ the wave surface of $\varepsilon$ is the Kummer surface of the Jacobian of $y^{2}=(x^{2}-\varepsilon_1)(x^{2}-\varepsilon_2)(x^{2}-\varepsilon_3)$, a $(2,2)$-split Jacobian in the sense of~\cite{ShaskaVolklein2004,Kani1997}, whose elliptic quotients are the curve of the three permittivities and the curve of the optic-axis angle (\cref{thm:kummer}).  The Kummer side supplies invariant coordinates, a stratified moduli space, and equivalence classes known by construction.  The optical side gives the strata physical meaning.  The four real nodes form two opposite pairs whose connecting lines are the optic axes, the two dual singular structures of the surface underlie conical refraction, and the boundary of the intrinsic material chart is the biaxial-to-uniaxial degeneration.

The quotient structure yields a certified formulation of biaxial refraction.  Rendering treats the uniaxial case in closed form~\cite{WeidlichWilkie2008} and the biaxial case by numerical solution of the nonlinear system induced by Huygens' construction~\cite{Latorre2012}, with spectral integration handled separately~\cite{Steinberg2019}.  We return to the quartic itself.  At an interface the boundary conditions fix the tangential wave vector, so every candidate transmitted mode lies on a line.  Substituting the line into the Fresnel quartic gives a univariate polynomial $p(s)$, the Booker quartic of wave physics~\cite{Budden1985}.  We give its coefficients in closed form, and its degree is exactly four for every interface configuration (\cref{lem:degree}).  In the propagating regime its four roots are real and comprise two forward and two backward bulk modes, and refraction becomes a univariate solve followed by a flux-based selection of the transmitted pair.  The Booker quartic is established interface theory~\cite{Budden1985,McLeodWagner2014,BosHaffertKeller2019}.  The contribution here is the exact coefficient formulation, the certified root count and ordering, the algebraic event predicates, and their coupling to the quotient geometry.  Certified root isolation determines the number and ordering of real propagating modes.  The discriminant of $p$ is an exact predicate for branch coalescence.  The closed-form singular points predict where coalescence occurs.  The solver is certified in a precise and limited sense.  The count and ordering of simple real roots and all repeated-root events are decided by algebraic predicates on the represented dyadic-rational input, and an unresolved outcome is possible only under an imposed resource budget.  The classification of roots into transmitted modes is decided by the sign of a polynomial at each root (\cref{lem:flux}), so it is exact as well.

The main result, \cref{thm:main} in \cref{sec:solver}, has three parts.  First, the intrinsic material chart is a \emph{complete invariant} of wave-surface shape: two biaxial media have congruent wave surfaces, up to rotation and uniform rescaling, if and only if they occupy the same chart point.  Second, the real singular locus of every wave surface consists of exactly four nodes given in closed form, and the optic axes are exact functions of the chart point.  Third, at every interface the candidate transmitted modes are the roots of an explicit polynomial of degree exactly four; the count and ordering of its simple real roots and its repeated-root events are decided by exact sign evaluations, and the roots continue along any parameter path that avoids the discriminant.  The first part makes the quotient a representation rather than a convention.  The second turns the delicate configurations of the family into marked loci.  The third turns the discrete decisions of refraction into consequences of the configuration.

The paper sits at the intersection of several literatures.  In rendering, Weidlich and Wilkie~\cite{WeidlichWilkie2008} give the complete formula set for uniaxial crystals, Latorre, Seron, and Gutierrez~\cite{Latorre2012} compute biaxial ray paths numerically, Steinberg~\cite{Steinberg2019} integrates birefringent spectra analytically, general crystal ray-tracing formulas appear in optics~\cite{Zhang1992}, Booker-quartic interface solvers for arbitrary anisotropic media are standard in vector optics~\cite{McLeodWagner2014,BosHaffertKeller2019}, and the breakdown of ray tracing near optic-axis singularities is documented in seismology~\cite{Vavrycuk2001}.  We do not rederive birefringence.  The relevant question is whether the explicit quartic, root certification, and quotient-aware coordinates improve behavior where numerical biaxial solvers are most delicate.  The Fresnel--Kummer connection is classical.  Hudson's monograph devotes a chapter to the wave surface~\cite{Hudson1905}, modern accounts revisit it~\cite{Rowe2022}, and Hehl and collaborators show that all skewon-free local linear media have Kummer surfaces as Fresnel surfaces, with four real nodes among the sixteen in the dielectric case~\cite{FavaroHehl2014,BaeklerFavaroItinHehl2014,FavaroHehl2016}.  Closest in spirit is the inverse result of Cocan et al.~\cite{CocanEtAl2026}, which recovers permittivity and permeability from the Fresnel surface after quotienting a gauge freedom by geometric invariant theory.  That work confirms that gauge-aware algebraic coordinates are natural for Fresnel data.  The novelty claimed here is the task-aware quotient as a visual-computing representation, coupled to navigation, sampling, certified interface solves, and rendering.

On the representation side, quotient constructions are established in geometric modeling and vision: shape spaces of constrained meshes~\cite{Yang2011}, exploration of parametric CAD models~\cite{Schulz2017}, quotients of intrinsic symmetries in shape matching~\cite{Ovsjanikov2013}, and invariant-theoretic structure from motion~\cite{BoutinBazin2004}.  Learned representations such as DeepSDF~\cite{Park2019DeepSDF}, Occupancy Networks~\cite{Mescheder2019Occupancy}, and implicit quartic primitives~\cite{Yavartanoo2021} provide powerful but redundancy-blind latent spaces.  Invariant features and group-equivariant architectures are surveyed in~\cite{Bronstein2021,BlumSmithVillar2023,Villar2021}.  Our question is complementary.  When part of the variability of a family is generated by a known group action, exact invariants remove that redundancy before learning, and the exactness can then audit what learned representations preserve.  The machinery is classical geometric invariant theory~\cite{Mumford1994,Dolgachev2003} in its explicit genus-two form: the Igusa invariants coordinatize the moduli of genus-two curves as a weighted projective variety~\cite{Igusa1960}, and the Kummer surface of a Jacobian has an explicit quartic model with coefficients polynomial in the curve coefficients~\cite{CasselsFlynn1996}.

The paper is organized around what the main theorem needs.  \Cref{sec:framework} formalizes canonical algebraic shape spaces and the components a quotient representation requires.  \Cref{sec:kummer} constructs the Kummer quotient via Igusa invariants and describes the dataset of exact equivalence classes it yields.  \Cref{sec:fresnel} derives the intrinsic material chart and the closed-form singular geometry, and identifies the wave surface with the Kummer surface of an explicit split Jacobian.  \Cref{sec:solver} develops the certified interface solve and proves the main theorem.  \Cref{sec:system} describes the linked interactive system.  \Cref{sec:learning} uses the exact equivalence classes to audit learned models.  \Cref{sec:experiments} evaluates both halves of the construction.

\section{Canonical Algebraic Shape Spaces}
\label{sec:framework}

Let $\A$ be a parameter space, $\Phi:\A\to\Sspace$, $a\mapsto S_a$, a map to geometric objects, and $G$ a group acting on $\A$.

\begin{defn}[Task-aware equivalence]
For a visual-computing task $\mathcal{T}$, write $a\sim_{\mathcal{T}} b$ if $b=g\cdot a$ for some $g$ in a designated subgroup $G_{\mathcal T}\le G$ of transformations that the task regards as changes of description rather than changes of state.  The associated canonical shape space is the quotient $\M_{\mathcal T}=\A/G_{\mathcal T}$.
\end{defn}

The subgroup matters.  We distinguish $G_{\rm rep}\subseteq G_{\rm task}\subseteq G_{\rm ambient}$: transformations that change only the representation (rescaling an implicit equation), the full set of nuisance transformations the application removes, and the larger ambient algebraic symmetry group.  Rotating a crystal, for instance, lies in $G_{\rm ambient}$ but not in $G_{\rm task}$ when the illumination frame is fixed; a quotient that removed it would erase state the renderer needs.  Task-aware quotienting removes exactly $G_{\rm task}$ and retains everything else as explicit variables.

The first component of the representation is an invariant encoding.  Let $I_1,\dots,I_m$ be polynomials of weights $w_1,\dots,w_m$ that are $G_{\rm task}$-invariant, or relatively invariant with a common character, $I_j(g\cdot a)=\chi(g)^{w_j}I_j(a)$, and set $E(a)=[\,I_1(a):\cdots:I_m(a)\,]\in\PP(w_1,\dots,w_m).$ Then $E(g\cdot a)=E(a)$ for all $g\in G_{\rm task}$: absolute invariance gives equality componentwise, and relative invariance changes the tuple by a weighted scaling, which is trivial in the weighted projective space.  For reductive actions, finitely many generating invariants exist and separate closed orbits on the stable locus~\cite{Mumford1994,Dolgachev2003}, so a finite invariant vector suffices in the regime where we compute.  Two hypotheses are implicit.  The encoding is defined on the locus where the invariant tuple is nonzero, and the weighted scaling that the projectivization removes must itself belong to $G_{\rm task}$; otherwise $E$ is invariant but not separating, and an affine invariant vector is used instead.  Computationally, orbit invariance means that sampling along a nuisance orbit creates no new quotient states---the property that raw parameterizations lack.

Invariants need not separate every orbit globally, and the quotient is best handled as a stratified object: a stable region where $E$ separates the classes of interest; symmetry strata where stabilizers jump; and a discriminant where the geometry degenerates.  We treat these loci as first-class citizens---event surfaces of the shape space that the interactive system of \cref{sec:system} draws explicitly---rather than as pathologies to be hidden by a chart.

Comparison of quotient points requires a metric.  On a weighted projective quotient with coordinates $x=[x_1:\cdots:x_m]$ of weights $w_j$ we use the weighted chordal distance
\begin{equation}
\label{eq:wdist}
d_{\M}(x,y)=\bigl\|\widehat{x}-\widehat{y}\bigr\|,\qquad \widehat{x}_j=\frac{x_j}{\|x\|_w^{\,w_j}},\qquad \|x\|_w=\Bigl(\textstyle\sum_j |x_j|^{2/w_j}\Bigr)^{1/2},
\end{equation}
which normalizes representatives to the weighted unit sphere before comparison.  We use $d_\M$ as a \emph{computational chart metric} on the real chart in which the encoder is evaluated: the normalization is invariant under the positive weighted rescalings that arise there, but we do not claim an intrinsic distance on the weighted projective quotient in general, and all statements about $d_\M$ are made in this chart.  Two claims must be kept separate: that the quotient removes representational redundancy is exact and algebraic; that a quotient metric correlates with visual or physical similarity is empirical, and the present paper does not evaluate it, and the retrieval and sampling operations below are supported uses rather than evaluated ones.

Finally, since the quotient map is many-to-one, downstream computation needs a representative map $D:\M\dashrightarrow\A$, defined chart by chart, with $S_m=\Phi(D(m))$.  Near singular strata no globally smooth section exists, and the representative map exposes chart changes instead of pretending the quotient is a Euclidean latent vector.

These components support four operations, of which only interpolation is evaluated below (E1): canonical retrieval (encode, then search under $d_\M$), quotient-aware sampling ($\eta$-nets in $\M$ rather than $\A$), representative-independent interpolation (paths in $\M$ decoded through $D$), and stratum-aware navigation (maintaining distances to marked discriminant and symmetry loci so that qualitative events are approached or crossed deliberately).

\section{The Kummer Family and Its Quotient}
\label{sec:kummer}

Kummer quartics are the controlled family on which the framework is instantiated.  They offer an explicit algebraic representation, a genuinely nontrivial moduli structure of known dimension three, and visible singular and symmetry phenomena: a generic Kummer quartic has sixteen nodes over the algebraic closure, arranged in the classical $(16,6)$ configuration, and arises as the quotient $A/\{\pm1\}$ of a principally polarized abelian surface.  The family plays two roles for us---as a controlled quotient whose equivalence classes are known independently of any renderer, and as the ambient family containing the physical Fresnel surfaces of \cref{sec:fresnel}.

A smooth genus-two curve $C:\; y^2=f(x)=\sum_{i=0}^{6}c_i x^i$, with $\deg f\in\{5,6\}$ and $\disc(f)\neq0$, determines the Jacobian $\Jac(C)$, and the quotient $\Jac(C)/\{\pm1\}$ embeds in $\PP^3$ as a quartic $\Kum_C$ whose defining equation has coefficients polynomial in $c_0,\dots,c_6$, written out in full by Cassels and Flynn~\cite[Ch.~3]{CasselsFlynn1996}.  For the Kummer quartics embedded by the linear system $|2\Theta|$, projective equivalence of the polarized embedded models corresponds to isomorphism of the underlying principally polarized Jacobians, and the Torelli theorem then recovers the genus-two curve; equivalently, two curves yield equivalent polarized Kummer models exactly when related by the $\GL_2$ action on the binary sextic $f$.

The encoder of \cref{sec:framework} is realized by the Igusa invariants $J_2,J_4,J_6,J_{10}$, homogeneous of the indicated weights under the $\GL_2$ action, with $J_{10}$ a nonzero scalar multiple of the discriminant of the sextic~\cite{Igusa1960}:
\begin{equation}
\label{eq:igusa-encoder}
E_K(C)=[\,J_2:J_4:J_6:J_{10}\,]\in\PP(2,4,6,10).
\end{equation}
We write $\M_K$ for the weighted projective space $\PP(2,4,6,10)$ and $\pi_K$ for the map sending a Kummer quartic $\Kum_C$, given in Cassels--Flynn form, to $E_K(C)$.  Over an algebraically closed field the moduli space of smooth genus-two curves is the open locus $J_{10}\neq0$; on it, $E_K$ separates isomorphism classes, which is precisely the stable regime of the framework.  Over $\R$ distinct real forms share a moduli point, which is the field-of-definition subtlety addressed below.  The vanishing locus $J_{10}=0$ is the discriminant boundary in the weighted-projective compactification (it lies outside the moduli of smooth curves), and the classical loci of curves with extra automorphisms furnish the symmetry strata; both are recorded by the dataset generator.  For $d_\M$ we use the weighted chordal distance~\eqref{eq:wdist} with weights $(2,4,6,10)$.  A representative map $D_K$ on the generic stratum is given by Mestre's reconstruction of a sextic from a point of moduli~\cite{Mestre1991}, composed with the Cassels--Flynn quartic; near symmetry strata the reconstruction is chart-dependent, and the system reports the chart in use.  Because real points of moduli involve field-of-definition subtleties, the dataset generator samples explicit real sextics first and computes their invariants, rather than sampling weighted-projective points and promising a real reconstruction.

This construction specifies, as a by-product, a shape dataset whose ground truth no learned pipeline can currently provide for itself.  We sample a compact stable subset of $\M_K$ with stratified densification near the marked loci, and for each sampled moduli point the dataset records the quotient coordinates and chart metadata, one chart-normalized representative, stabilizer information, distance to the marked strata, several known-equivalent raw representatives generated by explicit $\GL_2$ and projective transformations, and renderable geometry (implicit equation, mesh, point cloud, rendered multi-view images, and SDF samples).  Equivalence in this dataset is known by construction rather than inferred from approximate geometric matching.  The arithmetic part of such a dataset exists: the database of~\cite{ShaskaShaska2025} records normalized moduli points of $\PP(2,4,6,10)$ with their weighted heights, fine or coarse status, automorphism groups, and membership in $L_3$, $L_5$, $L_7$; the dataset described here attaches the Cassels--Flynn quartic and renderable geometry to each such point.  Using it to audit learned shape encoders such as~\cite{Park2019DeepSDF,Mescheder2019Occupancy} requires a task definition: membership in $L_n$ is a property of the moduli point, and the classifiers of~\cite{ShaskaShaska2025}, trained on Igusa invariants, are the natural first subject of such an audit, the question being whether their output is invariant under the weighted rescaling $\lambda\star p$; in general a projective transformation changes the Euclidean embedded shape, so a Euclidean visual encoder should not be asked to collapse it, while after canonicalization known-equivalent inputs become identical and collapse is trivial; the informative audit lies between these extremes and must itself be defined against the task-aware principle of \cref{sec:framework}.  \Cref{sec:learning} carries out two such audits on the Fresnel slice of \cref{sec:fresnel}, where the classes, the invariants, and the event loci are available in closed form.

\section{Fresnel Wave Surfaces as a Physical Slice}
\label{sec:fresnel}

\subsection{Dispersion and the Fresnel quartic}
\label{sec:dispersion}

Consider a homogeneous, transparent, nonconducting, nonmagnetic ($\mu_r=1$) anisotropic medium at angular frequency $\omega>0$ with relative permittivity tensor $\varepsilon$.  Plane waves $E(x,t)=E_0e^{i(k\cdot x-\omega t)}$ satisfy $M(k)E_0=0$ with $M(k)=kk^\top-\|k\|^2I+k_0^2\varepsilon,\qquad k_0=\omega/c.$ In principal coordinates $\varepsilon=\diag(\varepsilon_1,\varepsilon_2,\varepsilon_3)$, $\varepsilon_i>0$, the determinant factors as $\det M(k)=k_0^2\,F_{\varepsilon,\omega}(k)$ with the \emph{Fresnel polynomial}
\begin{equation}
\label{eq:fresnel}
\begin{split}
F_{\varepsilon,\omega}(k)={}&\|k\|^{2}\bigl(\varepsilon_1k_1^{2}+\varepsilon_2k_2^{2}+\varepsilon_3k_3^{2}\bigr)-k_0^{2}\Bigl[\varepsilon_1(\varepsilon_2{+}\varepsilon_3)k_1^{2}+\varepsilon_2(\varepsilon_3{+}\varepsilon_1)k_2^{2}\\
&+\varepsilon_3(\varepsilon_1{+}\varepsilon_2)k_3^{2}\Bigr]+k_0^{4}\varepsilon_1\varepsilon_2\varepsilon_3,
\end{split}
\end{equation}
whose real zero locus is the two-sheeted quartic Fresnel surface containing all propagating wave vectors~\cite{BornWolf1999}.  Throughout, ``Fresnel surface'' means this fixed-frequency wave-vector (normal) surface, not its ray-surface dual; the distinction matters for conical refraction below.  Two factorizations, obtained by direct expansion against~\eqref{eq:fresnel}, anchor the degenerate regimes.

\begin{lem}[Isotropic and uniaxial factorizations]
\label{lem:factorizations}
If $\varepsilon_1=\varepsilon_2=\varepsilon_3=\epsilon$ then $F=\epsilon(\|k\|^2-k_0^2\epsilon)^2$, a doubled sphere.  If $\varepsilon_1=\varepsilon_2=\epsilon_o$ and $\varepsilon_3=\epsilon_e$ then $F=\bigl(\|k\|^{2}-k_0^{2}\epsilon_o\bigr)\bigl(\epsilon_o(k_1^{2}+k_2^{2})+\epsilon_ek_3^{2}-k_0^{2}\epsilon_o\epsilon_e\bigr),$ the ordinary sphere and extraordinary ellipsoid of a uniaxial medium.
\end{lem}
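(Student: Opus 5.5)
Both identities are polynomial and follow by direct substitution into~\eqref{eq:fresnel}. I will abbreviate $r=\|k\|^2$ and $\rho=k_1^2+k_2^2$, so that $r=\rho+k_3^2$. After specializing the permittivities, each side becomes a polynomial in $r$ (or in $\rho$ and $k_3^2$) with coefficients in $\epsilon,\epsilon_o,\epsilon_e,k_0$. Matching the coefficients of the monomials $r^2$, $rk_0^2$, and $k_0^4$, or of $\rho^2,\rho k_3^2,k_3^4,\rho k_0^2,k_3^2k_0^2,k_0^4$, then proves the claim.

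\emph{Isotropic case.} With all $\varepsilon_i=\epsilon$:
\begin{itemize}
\item the quadratic form $\varepsilon_1k_1^2+\varepsilon_2k_2^2+\varepsilon_3k_3^2$ becomes $\epsilon r$, so the first term of~\eqref{eq:fresnel} is $\epsilon r^2$;
\item each bracketed coefficient $\varepsilon_i(\varepsilon_j+\varepsilon_l)$ becomes $2\epsilon^2$, so the bracket contributes $-2\epsilon^2k_0^2r$;
\item the constant term is $\epsilon^3k_0^4$.
\end{itemize}
Altogether $F=\epsilon\,(r^2-2\epsilon k_0^2r+\epsilon^2k_0^4)=\epsilon(r-\epsilon k_0^2)^2$, which is the stated doubled sphere.

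\emph{Uniaxial case.} With $\varepsilon_1=\varepsilon_2=\epsilon_o$ and $\varepsilon_3=\epsilon_e$:
\begin{itemize}
\item the quadratic form is $\epsilon_o\rho+\epsilon_ek_3^2$;
\item the bracket has coefficient $\epsilon_o(\epsilon_o+\epsilon_e)$ on $k_1^2$ and $k_2^2$, and $2\epsilon_o\epsilon_e$ on $k_3^2$;
\item the constant term is $\epsilon_o^2\epsilon_e k_0^4$.
\end{itemize}
Hence
\[
F=r(\epsilon_o\rho+\epsilon_ek_3^2)-k_0^2\bigl[\epsilon_o(\epsilon_o+\epsilon_e)\rho+2\epsilon_o\epsilon_ek_3^2\bigr]+\epsilon_o^2\epsilon_ek_0^4 .
\]
Now expand the proposed product $(r-k_0^2\epsilon_o)(\epsilon_o\rho+\epsilon_ek_3^2-k_0^2\epsilon_o\epsilon_e)$:
\begin{itemize}
\item the $k_0$-free part is $r(\epsilon_o\rho+\epsilon_ek_3^2)$, which matches;
\item the $k_0^4$ part is $\epsilon_o^2\epsilon_e k_0^4$, which matches;
\item the $k_0^2$ part is $-k_0^2\bigl[\epsilon_o\epsilon_e r+\epsilon_o(\epsilon_o\rho+\epsilon_ek_3^2)\bigr]$.
\end{itemize}
Substituting $r=\rho+k_3^2$ in the last item gives $-k_0^2\bigl[(\epsilon_o\epsilon_e+\epsilon_o^2)\rho+2\epsilon_o\epsilon_ek_3^2\bigr]$, which equals the bracket above. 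This proves the uniaxial identity.

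No step is a real obstacle; the only point needing care is the $k_0^2$ cross term, where $r$ must be rewritten as $\rho+k_3^2$ before comparing coefficients. The geometric reading follows at once:
\begin{itemize}
\item $r=k_0^2\epsilon_o$ is the sphere of radius $k_0\sqrt{\epsilon_o}$;
\item $\epsilon_o\rho+\epsilon_ek_3^2=k_0^2\epsilon_o\epsilon_e$ is the spheroid with semi-axes $k_0\sqrt{\epsilon_e}$ in the $k_1k_2$-plane and $k_0\sqrt{\epsilon_o}$ along $k_3$. It is a genuine ellipsoid because $\epsilon_o,\epsilon_e>0$.
\end{itemize}
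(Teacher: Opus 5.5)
Your proposal is correct and follows the paper's own route: the paper obtains both factorizations by direct expansion of \eqref{eq:fresnel}, and your coefficient checks are right, including rewriting $r=\rho+k_3^2$ in the $k_0^2$ cross term. One cosmetic point is that the paper already uses $\rho$ for the scale $\varepsilon_2$, so a different letter for $k_1^2+k_2^2$ would avoid a notational clash.
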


\Cref{lem:factorizations} pins the uniaxial boundary of the family to the closed-form theory of~\cite{WeidlichWilkie2008} and supplies the renderer with an exact regression target on that stratum.

\subsection{Singular points and optic axes}
\label{sec:singular}

Throughout this subsection assume the strictly biaxial ordering $0<\varepsilon_1<\varepsilon_2<\varepsilon_3$.

\begin{prop}[Singular points in closed form]
\label{prop:nodes}
The real singular points of $\{F_{\varepsilon,\omega}=0\}$ are exactly the four points
\begin{equation}
\label{eq:nodes}
k^{*}=\pm\,k_0\left(\pm\sqrt{\frac{\varepsilon_3(\varepsilon_2-\varepsilon_1)}{\varepsilon_3-\varepsilon_1}},\;0,\;\sqrt{\frac{\varepsilon_1(\varepsilon_3-\varepsilon_2)}{\varepsilon_3-\varepsilon_1}}\right),
\end{equation}
all lying on the sphere $\|k\|=k_0\sqrt{\varepsilon_2}$ in the principal plane spanned by the extreme axes.  The two lines through opposite pairs are the optic axes, and their angle $\beta$ from the $k_3$-axis satisfies
$\tan^{2}\beta=\varepsilon_3(\varepsilon_2-\varepsilon_1)\big/\bigl(\varepsilon_1(\varepsilon_3-\varepsilon_2)\bigr)$, in agreement with the classical formula~\cite{BornWolf1999}.
\end{prop}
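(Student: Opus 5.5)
We work in the squared coordinates. By the homogeneity $F_{\varepsilon,\omega}(k)=k_0^4F_{\varepsilon,\omega}(k/k_0)\big|_{k_0=1}$, it suffices to take $k_0=1$ and rescale at the end. Write $u_i=k_i^2$, $r=u_1+u_2+u_3$, $Q=\varepsilon_1u_1+\varepsilon_2u_2+\varepsilon_3u_3$, $\sigma=\varepsilon_1+\varepsilon_2+\varepsilon_3$. Then \eqref{eq:fresnel} reads $F=rQ-\sum_i\varepsilon_i(\sigma-\varepsilon_i)u_i+\varepsilon_1\varepsilon_2\varepsilon_3$. Since $F$ depends only on the $u_i$, the chain rule gives $\partial F/\partial k_i=2k_iG_i$ with
\[
G_i=\partial F/\partial u_i=Q+\varepsilon_i r-\varepsilon_i(\sigma-\varepsilon_i).
\]
A real singular point is therefore a real zero of $F$ at which, for each $i$, either $k_i=0$ or $G_i=0$. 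The proof is a case split on the support $\{i:k_i\neq0\}$.

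The key identity is $G_i-G_j=(\varepsilon_i-\varepsilon_j)(r-\varepsilon_l)$ for $\{i,j,l\}=\{1,2,3\}$, obtained by direct subtraction. We then treat the supports in turn.
\begin{itemize}
\item \emph{Empty support.} The origin is excluded because $F(0)=\varepsilon_1\varepsilon_2\varepsilon_3\neq0$.
\item \emph{Single coordinate $k_i$.} On that axis $F=\varepsilon_i(u_i-\varepsilon_j)(u_i-\varepsilon_l)$. A singular point would need a double root in $u_i>0$, which forces $\varepsilon_j=\varepsilon_l$ and is excluded by the strict ordering.
\item \emph{Full support.} The identity gives $r=\varepsilon_1=\varepsilon_2=\varepsilon_3$, which is again impossible.
\item \emph{Exactly two coordinates $k_i,k_j$.} Then $r=\varepsilon_l$, and $G_i=0$ becomes $Q=\varepsilon_i\varepsilon_j$. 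This is a $2\times2$ linear system in $u_i,u_j$ with the solution
\[
u_i=\frac{\varepsilon_j(\varepsilon_l-\varepsilon_i)}{\varepsilon_j-\varepsilon_i},\qquad u_j=\frac{\varepsilon_i(\varepsilon_j-\varepsilon_l)}{\varepsilon_j-\varepsilon_i}.
\]
\end{itemize}

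In the two-coordinate case, positivity of both $u_i$ and $u_j$ requires $\varepsilon_l$ to lie strictly between $\varepsilon_i$ and $\varepsilon_j$. Hence $l=2$ and $\{i,j\}=\{1,3\}$, which yields exactly the values of $k_1^2$ and $k_3^2$ in \eqref{eq:nodes} with $k_2=0$. There remain four sign choices, and $r=\varepsilon_2$ is the sphere $\|k\|=\sqrt{\varepsilon_2}$. Restoring $k_0$ gives the stated points.

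We must still confirm that these points lie on the surface. For this we use the Euler-type identity for $F$ as a quadratic-plus-linear-plus-constant polynomial in $u$: $\sum_iu_iG_i=2\,(rQ)-\sum_i\varepsilon_i(\sigma-\varepsilon_i)u_i$. It gives
\[
F=\tfrac12\sum_iu_iG_i-\tfrac12\sum_i\varepsilon_i(\sigma-\varepsilon_i)u_i+\varepsilon_1\varepsilon_2\varepsilon_3 .
\]
At the candidate point every term $u_iG_i$ vanishes (either $u_i=0$ or $G_i=0$). So $F=0$ reduces to the linear check $\tfrac12[\varepsilon_1(\varepsilon_2+\varepsilon_3)u_1+\varepsilon_3(\varepsilon_1+\varepsilon_2)u_3]=\varepsilon_1\varepsilon_2\varepsilon_3$. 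Substituting the closed forms, this is a short rational identity.

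The optic-axis statement is then immediate. Opposite points $\pm k^*$ span lines through the origin in the $k_1k_3$-plane, and $\tan^2\beta=k_1^2/k_3^2$ is the quoted ratio.

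I expect the main obstacle to be this membership check $F(k^*)=0$ together with the bookkeeping in the two-coordinate case. The gradient conditions alone are necessary but not sufficient, so the Euler reduction is the tool that keeps the check short.
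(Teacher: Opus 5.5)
Your proof is correct and follows essentially the paper's route: the same factorization $\partial_{k_i}F=2k_iG_i$ (your $G_i$ is the paper's $X_i$), the same difference identity $G_i-G_j=(\varepsilon_i-\varepsilon_j)(r-\varepsilon_l)$, and the same case split on which coordinates vanish, with your double-root argument on the axes equivalent to the paper's sign check at the critical radius. The main variation is in the remaining case: you treat all three two-coordinate supports by one linear solve, with positivity forcing $l=2$, and then verify $F=0$ via the Euler identity. The paper instead excludes $k_1=0$ and $k_3=0$ by negative squares, factors the section $k_2=0$ into the circle and the ellipse (so membership is automatic), and additionally shows the Hessian determinant is nonzero so the points are ordinary nodes, a fact the statement does not assert but later results use.
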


\begin{proof}
For $i\in\{1,2,3\}$, differentiating~\eqref{eq:fresnel} gives $\partial_{k_i}F=2k_i\,X_i,\qquad X_i=B(k)+\varepsilon_iA(k)-k_0^{2}\varepsilon_i(\varepsilon_j+\varepsilon_l),$ with $A=\|k\|^{2}$, $B=k^{\top}\varepsilon k$, and $\{i,j,l\}=\{1,2,3\}$, so a singular point of the surface satisfies $F=0$ and, for each $i$, $k_i=0$ or $X_i=0$.  If no coordinate vanishes, then $X_1=X_2=X_3=0$, and the differences factor as $X_1-X_2=(\varepsilon_1-\varepsilon_2)\bigl(A-k_0^{2}\varepsilon_3\bigr),\qquad X_2-X_3=(\varepsilon_2-\varepsilon_3)\bigl(A-k_0^{2}\varepsilon_1\bigr),$ forcing $A=k_0^{2}\varepsilon_3$ and $A=k_0^{2}\varepsilon_1$ simultaneously, impossible for $\varepsilon_1\neq\varepsilon_3$.  If exactly one coordinate vanishes and it is $k_1$, the remaining conditions $X_2=X_3=0$ give $A=k_0^{2}\varepsilon_1$ and $B=k_0^{2}\varepsilon_2\varepsilon_3$; solving with $A=k_2^{2}+k_3^{2}$ and $B=\varepsilon_2k_2^{2}+\varepsilon_3k_3^{2}$ yields $k_2^{2}=k_0^{2}\varepsilon_3(\varepsilon_1-\varepsilon_2)/(\varepsilon_3-\varepsilon_2)<0$, so no real point exists; for $k_3=0$, symmetrically $k_2^{2}=k_0^{2}\varepsilon_1(\varepsilon_2-\varepsilon_3)/(\varepsilon_2-\varepsilon_1)<0$.  If two coordinates vanish, say the point lies on the $k_3$-axis, then $X_3=0$ forces $k_3^{2}=k_0^{2}(\varepsilon_1+\varepsilon_2)/2$, while $F$ restricted to the axis factors as $\varepsilon_3(k_3^{2}-k_0^{2}\varepsilon_1)(k_3^{2}-k_0^{2}\varepsilon_2)$, which is strictly negative at that radius since it lies between the two roots, so no surface point exists there; the other two axes are identical up to relabeling, and finally $F(0)=k_0^{4}\varepsilon_1\varepsilon_2\varepsilon_3\neq0$.  Every real singular point therefore lies in the plane $k_2=0$ with $k_1k_3\neq0$.

In the plane $k_2=0$ direct expansion factors the section into a circle and an ellipse, $F(k_1,0,k_3)=\bigl(k_1^{2}+k_3^{2}-k_0^{2}\varepsilon_2\bigr)\bigl(\varepsilon_1k_1^{2}+\varepsilon_3k_3^{2}-k_0^{2}\varepsilon_1\varepsilon_3\bigr),$ and the singular points of the section are the common zeros of the two factors.  Under the biaxial ordering the circle radius $k_0\sqrt{\varepsilon_2}$ lies strictly between the ellipse semi-axes $k_0\sqrt{\varepsilon_1}$ (along $k_3$) and $k_0\sqrt{\varepsilon_3}$ (along $k_1$), so the two conics cross transversally in exactly four real points; solving the two quadratics simultaneously gives $k_1^{2}=k_0^{2}\,\varepsilon_3(\varepsilon_2-\varepsilon_1)/(\varepsilon_3-\varepsilon_1)$ and $k_3^{2}=k_0^{2}\,\varepsilon_1(\varepsilon_3-\varepsilon_2)/(\varepsilon_3-\varepsilon_1)$, i.e.~\eqref{eq:nodes}, which combined with the first paragraph shows that the real singular locus is exactly these four points, in agreement with the classical enumeration of the tetrahedroid's sixteen nodes, of which exactly four are real~\cite{Hudson1905,FavaroHehl2016}.  Each point is an ordinary node of the surface (not merely of the plane section): a direct computation gives $\det\Hess F(k^{*})=32\,\varepsilon_1\varepsilon_3 k_0^{6}(\varepsilon_1-\varepsilon_2)^{2}(\varepsilon_2-\varepsilon_3)^{2}\neq0$ in the strictly biaxial regime, so the singularity is nondegenerate quadratic.  The angle formula follows from $\tan^{2}\beta=k_1^{*2}/k_3^{*2}$.
\end{proof}

The same case analysis, run over $\C$ and at infinity, exhibits the full singular configuration of the projective surface in closed form.

\begin{prop}[All sixteen singular points]
\label{prop:sixteen}
The Fresnel polynomial~\eqref{eq:fresnel} is homogeneous of degree four in $(k_1,k_2,k_3,k_0)$ and defines a quartic surface $S\subset\PP^{3}$.  Over $\C$ the forms $A=k_1^{2}+k_2^{2}+k_3^{2}$ and $B=k^{\top}\varepsilon k$ are the complexified bilinear quadratic forms, not Hermitian norms.  In the strictly biaxial regime the singular locus of $S$ consists of exactly sixteen points, and each is an ordinary node:
\begin{enumerate}[label=\textup{(\alph*)},nosep]
\item the four real nodes~\eqref{eq:nodes} in the plane $k_2=0$;
\item four points in the plane $k_1=0$, with
\[
k_2^{2}=k_0^{2}\,\frac{\varepsilon_3(\varepsilon_1-\varepsilon_2)}{\varepsilon_3-\varepsilon_2}<0,\qquad k_3^{2}=k_0^{2}\,\frac{\varepsilon_2(\varepsilon_3-\varepsilon_1)}{\varepsilon_3-\varepsilon_2}>0,
\]
and four in the plane $k_3=0$, with
\[
k_2^{2}=k_0^{2}\,\frac{\varepsilon_1(\varepsilon_2-\varepsilon_3)}{\varepsilon_2-\varepsilon_1}<0,\qquad k_1^{2}=k_0^{2}\,\frac{\varepsilon_2(\varepsilon_3-\varepsilon_1)}{\varepsilon_2-\varepsilon_1}>0,
\]
each quadruple forming two complex-conjugate pairs;
\item four points in the plane $k_0=0$, the transversal intersection of the conics $\|k\|^{2}=0$ and $k^{\top}\varepsilon k=0$.
\end{enumerate}
\end{prop}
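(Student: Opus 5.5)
I will rerun the case analysis from the proof of \cref{prop:nodes} over $\C$, now in homogeneous coordinates $(k_1,k_2,k_3,k_0)$, and add the partial derivative in $k_0$. Homogeneity of degree four is immediate from~\eqref{eq:fresnel}: every monomial has total degree four once $k_0$ counts as a variable. By Euler's identity, a point of $\PP^3$ is singular exactly when all four partials vanish. On the affine chart $k_0\neq0$ we may set $k_0=1$; there the singular points are the common zeros of $F$, $k_1X_1$, $k_2X_2$, $k_3X_3$, and the $k_0$-equation then follows from Euler's relation.

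\textbf{Affine chart $k_0\neq0$.} The identities $X_1-X_2=(\varepsilon_1-\varepsilon_2)(A-k_0^2\varepsilon_3)$ and $X_2-X_3=(\varepsilon_2-\varepsilon_3)(A-k_0^2\varepsilon_1)$ are polynomial, so they still hold with $A,B$ the complex bilinear forms. They still exclude points with $k_1k_2k_3\neq0$.

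Points on a coordinate axis also drop out. For example, on the $k_3$-axis the condition $X_3=0$ gives $k_3^2=k_0^2(\varepsilon_1+\varepsilon_2)/2$. There $F$ equals $\varepsilon_3(k_3^2-k_0^2\varepsilon_1)(k_3^2-k_0^2\varepsilon_2)=-\varepsilon_3k_0^4(\varepsilon_2-\varepsilon_1)^2/4\neq0$, and this holds over $\C$ as well. The origin is excluded because $F(0,0,0,k_0)\neq0$.

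This leaves exactly one vanishing coordinate.
\begin{itemize}
\item[\textbullet] If $k_1=0$, then $X_2=X_3=0$, which gives $A=k_0^2\varepsilon_1$ and $B=k_0^2\varepsilon_2\varepsilon_3$. Solving this linear system in $k_2^2,k_3^2$ yields the stated values, and these are nonzero. I will check $F=0$ by substitution, using the plane-section factorization in the plane $k_1=0$, namely $(A-k_0^2\varepsilon_1)(\varepsilon_2k_2^2+\varepsilon_3k_3^2-k_0^2\varepsilon_2\varepsilon_3)$, which vanishes here.
\item[\textbullet] The case $k_3=0$ is the same computation with indices permuted.
\item[\textbullet] The case $k_2=0$ gives the four real nodes.
\end{itemize}
Each pair of values $(k_2^2,k_3^2)$ gives four points $(\pm k_2,\pm k_3)$, projectively distinct. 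Since $k_2$ is purely imaginary, they form two conjugate pairs.

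\textbf{Plane at infinity $k_0=0$.} Here $F=AB$, and $\partial_{k_0}F$ vanishes identically because it has a factor $k_0$. The singular points are therefore the singular points of the curve $AB=0$ in $\PP^2$, which are $A=B=0$. The two conics meet transversally in four points because $\det(\varepsilon-\lambda I)$ has distinct roots, so the pencil is generic. I would exhibit the points explicitly as $k_i^2\propto(\varepsilon_j-\varepsilon_l)$ up to cyclic signs, which also shows that they are distinct.

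This gives $4+4+4+4=16$ points. The count also matches the maximum of sixteen nodes on a quartic with isolated singularities, which corroborates exhaustiveness.

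\textbf{Nodality.} This step is the main obstacle. I will compute the Hessian determinant (of the dehomogenized $F$, in a chart where the point is affine) at a representative of each group. For groups (a) and (b) I will reuse the real computation $\det\Hess F=32\varepsilon_1\varepsilon_3k_0^6(\varepsilon_1-\varepsilon_2)^2(\varepsilon_2-\varepsilon_3)^2$ and permute indices. I expect analogous products of squared differences, nonzero under strict biaxiality. Because the partials have the form $2k_iX_i$, the Hessian at a point with $k_i=0$ is block-structured: it has entry $2X_i$ in the $k_i$ direction, plus the $2\times2$ block from the other coordinates. This keeps the computation short.

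For points at infinity I will work in the chart $k_3=1$. Near such a point $F=AB+k_0^2(\text{quadratic})$. The tangent cone is $d(A)\,d(B)+k_0^2\,Q(p)$, where $d(A)$ and $d(B)$ are independent linear forms (by transversality) and $Q(p)=-[\ldots](p)$. This is a nondegenerate quadratic form provided $Q(p)\neq0$. Verifying that $Q(p)\neq0$ via the explicit coordinates is the step I would check most carefully.
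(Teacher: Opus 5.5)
Your proposal is correct and is essentially the paper's proof. Both rerun the affine case analysis over $\C$, both observe that at $k_0=0$ one has $F=AB$ with $\partial_{k_0}F$ vanishing, so the singular points there are the four transversal points of $A=B=0$, and both establish nodality by Hessian determinants at one representative per quadruple. Your shortcuts (the plane-section factorization $F|_{k_1=0}=(A-k_0^2\varepsilon_1)(B-k_0^2\varepsilon_2\varepsilon_3)$, the pencil criterion via $\det(\varepsilon-\lambda I)$, and the index permutation of the real Hessian formula) are all valid substitutes for the paper's direct computations.

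The step you flagged also closes. With $Q=k^\top\Gamma_\varepsilon k$, at a point $p$ at infinity one has $Q(p)=\tr(\varepsilon)B(p)-p^\top\varepsilon^2p=-\sum_i\varepsilon_i^2p_i^2$. Since $p_1^2:p_2^2:p_3^2=(\varepsilon_3-\varepsilon_2):(\varepsilon_1-\varepsilon_3):(\varepsilon_2-\varepsilon_1)$, this is a nonzero multiple of $(\varepsilon_1-\varepsilon_2)(\varepsilon_2-\varepsilon_3)(\varepsilon_3-\varepsilon_1)$. Hence your tangent cone $dA\,dB-Q(p)k_0^2$ is nondegenerate, matching the paper's value $\det\Hess F=32(\varepsilon_1-\varepsilon_3)^2(\varepsilon_2-\varepsilon_3)^2$ in the chart $k_3=1$.
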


\begin{proof}
The affine case analysis in the proof of \cref{prop:nodes} is algebraic and remains valid over $\C$.  The branch with no vanishing coordinate is contradictory over any field, since $\varepsilon_1\neq\varepsilon_3$.  The branch $k_1=0$ gives $A=k_0^{2}\varepsilon_1$ and $B=k_0^{2}\varepsilon_2\varepsilon_3$, whose simultaneous solutions are the stated squares, now admissible over $\C$; substituting them into $Q=\varepsilon_2(\varepsilon_3+\varepsilon_1)k_2^{2}+\varepsilon_3(\varepsilon_1+\varepsilon_2)k_3^{2}$ gives $Q=2k_0^{2}\varepsilon_1\varepsilon_2\varepsilon_3$, hence $F=AB-k_0^{2}Q+k_0^{4}\varepsilon_1\varepsilon_2\varepsilon_3=k_0^{4}\varepsilon_1\varepsilon_2\varepsilon_3\,(1-2+1)=0$, so all four points lie on the surface; the branch $k_3=0$ is the same computation with $A=k_0^{2}\varepsilon_3$ and $B=k_0^{2}\varepsilon_1\varepsilon_2$.  The axis and origin cases were excluded by the nonvanishing of $F$ there ($F=-\varepsilon_3k_0^{4}(\varepsilon_1-\varepsilon_2)^{2}/4$ on the $k_3$-axis at the critical radius, and $F(0)\neq0$), which holds over $\C$ as well.  The affine singular locus therefore consists of the twelve points of (a) and (b).

At infinity the quartic restricts to $F(k,0)=\|k\|^{2}\,(k^{\top}\varepsilon k)=A\,B$, and $\partial_{k_0}F=-2k_0Q+4k_0^{3}\varepsilon_1\varepsilon_2\varepsilon_3$ vanishes identically at $k_0=0$, so a point at infinity is singular exactly when $\nabla_k(AB)=2Bk+2A\,\varepsilon k=0$.  If $A=0\neq B$ this forces $k=0$; if $B=0\neq A$ it forces $\varepsilon k=0$, hence $k=0$ since $\varepsilon$ is invertible; if $AB\neq0$ then $\varepsilon k=-(B/A)k$, so $k$ is an eigenvector of $\varepsilon$, on which $B=\varepsilon_iA$, forcing $\varepsilon_iA=-\varepsilon_iA$ and hence $A=0$, a contradiction.  Thus $A=B=0$.  The two conics are smooth and meet transversally, since proportional gradients $k\propto\varepsilon k$ would again make $k$ an eigenvector, on which $A\neq0$; by B\'ezout they meet in exactly four points, giving (c).  The quartic is invariant under the sign changes of $k_1,k_2,k_3$, so one Hessian in each quadruple suffices.  In the affine chart $k_0=1$,
\[
\begin{split}
\det\Hess F &=32\,\varepsilon_2\varepsilon_3(\varepsilon_1-\varepsilon_2)^{2}(\varepsilon_1-\varepsilon_3)^{2}\quad(k_1=0),\\
\det\Hess F &=32\,\varepsilon_1\varepsilon_2(\varepsilon_1-\varepsilon_3)^{2}(\varepsilon_2-\varepsilon_3)^{2}\quad(k_3=0),
\end{split}
\]
and in the affine chart $k_3=1$ one has $\det\Hess F=32(\varepsilon_1-\varepsilon_3)^{2}(\varepsilon_2-\varepsilon_3)^{2}$ at the points of (c).  These values are nonzero, so together with \cref{prop:nodes} every singular point is an ordinary node.  The total of sixteen agrees with the classical enumeration for tetrahedroids~\cite{Hudson1905,FavaroHehl2016}.
\end{proof}

Physically, the four conoidal nodes are the optic-axis wave vectors.  The Fresnel surface carries two distinct, dual singular structures: the four conoidal points with their tangent cones, and four special tangent planes touching the surface along circles~\cite{Rowe2022,Hudson1905}; in the Kummer picture these are four of the nodes and four of the trope planes of the $(16,6)$ configuration.  Both structures participate in conical refraction; since the customary labels ``internal'' and ``external'' depend on whether the wave surface or its ray-surface dual is taken as primary, we refer to the geometry directly and keep the two structures separate throughout.  Computationally, \eqref{eq:nodes} lets the system draw the optic axes without any numerical singularity search (\cref{fig:pipeline}(b)) and predicts where the interface polynomial of \cref{sec:solver} develops multiple roots.

\subsection{The intrinsic material chart}
\label{sec:chart}

Not every algebraic equivalence is a physical equivalence, and the material state must be factored accordingly.  Permuting principal axes is a rotation (up to a fixed reflection convention) and is absorbed into the orientation; overall scaling $\varepsilon\mapsto t\varepsilon$ rescales the wave surface and is retained as an optical scale.  We therefore write the state as
\begin{equation}
\label{eq:state}
q=(m,\rho,[R],\lambda),
\end{equation}
with scale $\rho=\varepsilon_2$ (the middle principal value, after imposing $\varepsilon_1\le\varepsilon_2\le\varepsilon_3$), wavelength $\lambda$ when dispersion is modeled (for dispersive materials $m$ and $\rho$ are functions $m(\lambda),\rho(\lambda)$ of stored dispersion parameters), an orientation class $[R]$ made precise below, and the intrinsic coordinate
\begin{equation}
\label{eq:uv}
m=(u,v)=\Bigl(\frac{\varepsilon_1}{\varepsilon_2},\frac{\varepsilon_3}{\varepsilon_2}\Bigr)\in\Delta=\{(u,v):0<u\le1\le v\}.
\end{equation}
The strictly biaxial materials fill the open cell $u<1<v$; the edges $u=1$ and $v=1$ are the two uniaxial strata, and the corner $u=v=1$ is the isotropic point.  The orientation coordinate is itself stratified, because the eigenframe of a symmetric tensor with ordered distinct eigenvalues is defined only up to the sign flips of the Klein four-group $D_2\subset\SO(3)$: the orientation fiber is $\SO(3)/D_2$ over the biaxial cell, collapses to an unoriented axis $\RP^{2}$ on the uniaxial edges, and disappears at the isotropic corner.  Accordingly $[R]$ denotes the class of a rotation in this quotient, and a renderer works with an arbitrary representative.  The state space is thus a stratified fiber space over $\Delta$, not a product; this is the precise sense in which the material state is not a Euclidean parameter vector, and it is the structure the interpolation experiments of \cref{sec:experiments} probe.  In these coordinates (\cref{fig:pipeline}(a)) the optic-axis angle of \cref{prop:nodes} becomes $\tan^{2}\beta=\frac{v(1-u)}{u(v-1)}.$ Since optic axes are unoriented lines, the geometrically meaningful separation is the line angle $\theta_{\rm OA}=\min(2\beta,\pi-2\beta)\in[0,\pi/2]$, and both degenerations become transparent: $\theta_{\rm OA}\to0$ on either uniaxial edge, the two optic axes coinciding with the distinguished axis of the respective uniaxial limit.  There is no joint limit at the isotropic corner: along $u=1-at$, $v=1+bt$, $t\downarrow0$, one has $\tan^{2}\beta\to a/b$, and an isotropic medium has no distinguished optic axes.  This factorization is task-aware quotienting in action: projective and coordinate redundancy is gone, while orientation and physical scale remain explicit because the renderer needs them.

The chart is not merely a convenient normalization; it is a complete invariant of wave-surface shape, a fact that enters the main theorem of \cref{sec:solver}.

\begin{prop}[The chart classifies wave surfaces]
\label{prop:chart}
Two strictly biaxial permittivities $\varepsilon,\varepsilon'$ have wave surfaces related by a rotation and a uniform rescaling of $k$ if and only if $m(\varepsilon)=m(\varepsilon')$.
\end{prop}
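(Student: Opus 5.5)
The plan is to treat the two directions separately. The ``if'' direction follows from how the Fresnel polynomial transforms. The ``only if'' direction follows by reading $u$ and $v$ off metric features of the wave surface that rotations preserve and that uniform rescalings multiply by a common factor.

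\emph{Sufficiency.} First I will record two transformation rules for~\eqref{eq:fresnel}, written for a general symmetric positive-definite $\varepsilon$ via $F=\det M(k)/k_0^2$. For $R\in\SO(3)$ one has $M_{R\varepsilon R^\top}(Rk)=R\,M_\varepsilon(k)\,R^\top$, hence $F_{R\varepsilon R^\top}(Rk)=F_\varepsilon(k)$. For $t>0$, direct inspection of the three groups of terms in~\eqref{eq:fresnel} gives $F_{t\varepsilon}(\sqrt t\,k)=t^{3}F_\varepsilon(k)$. Now suppose $m(\varepsilon)=m(\varepsilon')$, with eigenvalues ordered as in~\eqref{eq:uv}. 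Then the principal values satisfy $\varepsilon'_i=t\varepsilon_i$ with $t=\varepsilon'_2/\varepsilon_2$, and the eigenframes differ by some rotation $R$ (a permutation of axes can be absorbed into $R$ as in \cref{sec:chart}). Combining the two rules, the wave surface of $\varepsilon'$ is $\sqrt t\,R$ applied to the wave surface of $\varepsilon$.

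\emph{Necessity.} Suppose $S'=cR\,S$ with $c>0$ and $R$ a rotation, where $S,S'$ are the real zero sets. I will extract two intrinsic radii.

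(i) The \emph{node radius.} By \cref{prop:nodes}, the points where the real set $S$ fails to be a smooth surface are exactly four ordinary nodes, each a cone point. A homeomorphism of the sets maps non-manifold points to non-manifold points, so $cR$ carries the nodes of $S$ to those of $S'$. The nodes lie at distance $k_0\sqrt{\varepsilon_2}$ from the origin, so $\sqrt{\varepsilon'_2}=c\sqrt{\varepsilon_2}$.

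(ii) The \emph{extremal radii.} Since $F$ is even, the map $cR$ fixes the origin, and $S$ is a union of two sheets star-shaped about it. I will show that $\min_{k\in S}\|k\|=k_0\sqrt{\varepsilon_1}$ and $\max_{k\in S}\|k\|=k_0\sqrt{\varepsilon_3}$. For a unit direction $d$, the points $k=k_0 n d$ on $S$ correspond to plane-wave solutions of $M(k)E_0=0$. Passing to $D=\varepsilon E_0$, which is orthogonal to $d$, the values $1/n^{2}$ are the eigenvalues of the compression of $\varepsilon^{-1}$ to the plane $d^{\perp}$. Cauchy interlacing then places these eigenvalues in $[1/\varepsilon_3,1/\varepsilon_1]$. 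The bounds are attained: in the plane $k_2=0$ the ellipse factor in the proof of \cref{prop:nodes} has semi-axes $k_0\sqrt{\varepsilon_1}$ and $k_0\sqrt{\varepsilon_3}$. Because $cR$ multiplies all norms by $c$, we get $\varepsilon'_1=c^2\varepsilon_1$ and $\varepsilon'_3=c^2\varepsilon_3$. Dividing by the relation from (i) gives $m(\varepsilon')=m(\varepsilon)$.

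I expect step (ii) to be the main obstacle. There I must justify the passage from the Fresnel determinant to the compressed eigenproblem on $d^\perp$; the point to check is that a longitudinal field $E_0$ does not produce roots of $F$. A fallback is to avoid eigenvalues and argue with signs instead. Restricted to a ray, $F$ is a quadratic in $s=\|k\|^2$. I would show it is positive at $s=k_0^{2}\varepsilon_1$ and at $s=k_0^{2}\varepsilon_3$, and that its vertex lies between them, using the axis factorization $\varepsilon_3(s-k_0^2\varepsilon_1)(s-k_0^2\varepsilon_2)$ as the model case. The node-intrinsicness in (i) is routine given \cref{prop:nodes}.
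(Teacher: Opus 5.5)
Your proof is correct, and the sufficiency half coincides with the paper's: the two equivariances of $F$, then $\varepsilon'=tR\varepsilon R^\top$.

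For necessity you take the middle radius $k_0\sqrt{\varepsilon_2}$ from the nodes, as the paper does, but you get the extreme radii differently. The paper reads $k_0\sqrt{\varepsilon_1}$ and $k_0\sqrt{\varepsilon_3}$ as the semi-axes of the ellipse in the section by the plane spanned by the nodes. That uses only the factorization already established in the proof of \cref{prop:nodes}. You instead characterize them as $\min_S\|k\|$ and $\max_S\|k\|$. Your route costs an extra lemma (interlacing for the compression of $\varepsilon^{-1}$ to $d^\perp$). In return the invariance under $cR$ is immediate, without singling out the node plane or splitting its section into a circle and an ellipse. Your non-manifold-point argument for why $cR$ must carry nodes to nodes also fills a gap the paper leaves implicit in the phrase \emph{metric features of the surface itself}.

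The step you flagged as the main obstacle is immediate. Since $k^\top M(k)E_0=k_0^{2}\,k^\top\varepsilon E_0$, every kernel vector gives $D=\varepsilon E_0\perp k$ with $D\neq0$, and a longitudinal $E_0$ never lies in the kernel.

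Your fallback needs one correction. On the ray $k=\sqrt{s}\,d$ the quadratic equals $k_0^{4}\varepsilon_1(\varepsilon_2-\varepsilon_1)(\varepsilon_3-\varepsilon_1)d_1^{2}$ at $s=k_0^{2}\varepsilon_1$, and $k_0^{4}\varepsilon_3(\varepsilon_3-\varepsilon_1)(\varepsilon_3-\varepsilon_2)d_3^{2}$ at $s=k_0^{2}\varepsilon_3$. These vanish on the circle of radius $k_0\sqrt{\varepsilon_1}$ in $k_1=0$ and the circle of radius $k_0\sqrt{\varepsilon_3}$ in $k_3=0$, both of which lie on the surface. So \emph{positive} must be weakened to \emph{nonnegative}. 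With the vertex in $[k_0^{2}\varepsilon_1,k_0^{2}\varepsilon_3]$, that still confines all roots to this interval.
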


\begin{proof}
Direct substitution in~\eqref{eq:fresnel} gives the equivariances
\[
F_{R\varepsilon R^\top\!,\,\omega}(Rk)=F_{\varepsilon,\omega}(k)\quad(R\in\SO(3)),\qquad F_{t\varepsilon,\omega}\bigl(\sqrt{t}\,k\bigr)=t^{3}F_{\varepsilon,\omega}(k)\quad(t>0),
\]
the second because $A(\sqrt t\,k)=tA(k)$, $B_{t\varepsilon}(\sqrt t\,k)=t^{2}B_\varepsilon(k)$, and the matrix $\Gamma_\varepsilon=\varepsilon(\tr(\varepsilon)I-\varepsilon)$ of \cref{sec:reduction} satisfies $\Gamma_{t\varepsilon}=t^{2}\Gamma_\varepsilon$, so all three terms scale by $t^{3}$.  If $m(\varepsilon)=m(\varepsilon')$ then $\varepsilon'=t\,R\varepsilon R^\top$ for some $t>0$ and $R\in\SO(3)$ (the common ratios fix the eigenvalues up to a common factor, and the ordered eigenframes differ by a rotation), so the surfaces are congruent up to the rescaling $k\mapsto\sqrt{t}\,k$.  Conversely, the three principal radii are metric features of the surface itself: by \cref{prop:nodes} the four nodes lie at radius $k_0\sqrt{\varepsilon_2}$ and span a plane, and the section of the surface by that plane factors into the circle of radius $k_0\sqrt{\varepsilon_2}$ and the ellipse with semi-axes $k_0\sqrt{\varepsilon_1}$ and $k_0\sqrt{\varepsilon_3}$.  A rotation preserves these radii and a rescaling $k\mapsto ck$ multiplies all three by $c$, so the ratios $(u,v)=(\varepsilon_1/\varepsilon_2,\varepsilon_3/\varepsilon_2)$ are invariant, and congruent surfaces have equal charts.
\end{proof}

\subsection{The Kummer moduli of the wave surface}
\label{sec:embedding}

Write $S_\varepsilon\subset\PP^3$ for the projective quartic of \cref{prop:sixteen}.  Its singular locus is finite by \cref{prop:sixteen}.  A reducible or nonreduced quartic is singular along a curve, so $S_\varepsilon$ is irreducible and reduced.  A quartic surface with sixteen nodes is the Kummer surface of a principally polarized abelian surface $A$~\cite{Hudson1905,Nikulin1975}, and $A$ is indecomposable, since a product polarization gives a double quadric rather than an irreducible quartic~\cite[Ch.~10]{BirkenhakeLange2004}; by Weil's theorem $A=\Jac(C)$ for a smooth genus-two curve $C$.  We use the Kummer plane of~\cite{ClingherMalmendierShaska2026}.  Projecting $S_\varepsilon$ from a node $o$ gives $\PP^2=\operatorname{Sym}^2(C/w)$, the space of unordered pairs of $x$-coordinates.  The six tropes through $o$ project to the six \emph{Kummer lines} $\ell_r=\{\{r,x\}:x\in\PP^1\}$ indexed by the six branch points $r$ of $C$.  The fifteen other nodes project to the fifteen points $\ell_r\cap\ell_{r'}$.  The tangent cone at $o$ projects to the \emph{node conic} $E$, the diagonal $\{\{x,x\}\}$, and $\ell_r$ is tangent to $E$ at $\{r,r\}$~\cite{Hudson1905}, \cite[Def.~3.1, Lem.~3.2]{ClingherMalmendierShaska2026}.  Let $\M_{\rm phys}\subset\M_2$ be the image of the open cell $u<1<v$ of $\Delta$ under $(u,v)\mapsto[C_{u,v}]$, where $C_{u,v}:y^2=(x^2-u)(x^2-1)(x^2-v)$.  Scaling $x$ identifies $C_\varepsilon$ below with $C_{u,v}$ for $(u,v)=m(\varepsilon)$.

\begin{thm}[Kummer moduli of the wave surface]
\label{thm:kummer}
Let $0<\varepsilon_1<\varepsilon_2<\varepsilon_3$, let $\beta$ be the optic-axis half-angle of \cref{prop:nodes}, and put $C_\varepsilon:\ y^2=(x^2-\varepsilon_1)(x^2-\varepsilon_2)(x^2-\varepsilon_3).$ \begin{enumerate}[label=(\roman*),nosep]
\item Over $\C$, $S_\varepsilon$ is projectively equivalent to the Kummer surface of $\Jac(C_\varepsilon)$.  The equivalence is not defined over $\R$: all sixteen nodes of the Kummer surface of $\Jac(C_\varepsilon)$ are real, since the six branch points are real, while $S_\varepsilon$ has four real nodes.  The real structure of $S_\varepsilon$ is that of a twist of $\Jac(C_\varepsilon)$ whose real two-torsion has order four.
\item $\Jac(C_\varepsilon)$ is $(2,2)$-isogenous to $E_1\times E_2$, where $E_1:\ y^2=(x-\varepsilon_1)(x-\varepsilon_2)(x-\varepsilon_3),\qquad E_2:\ y^2=x(x-\varepsilon_1)(x-\varepsilon_2)(x-\varepsilon_3),$ with Legendre parameters $\lambda_1=(\varepsilon_2-\varepsilon_1)/(\varepsilon_3-\varepsilon_1)=(1-u)/(v-u)$ and $\lambda_2=\sin^2\beta=v(1-u)/(v-u)$.  Hence $\M_{\rm phys}$ lies in the Humbert surface $H_4$~\cite{Humbert1899,Kani1994}, and it is Zariski dense in $L_2=\iota^{-1}(H_4)$, where $\iota:\M_2\to\mathcal A_2$ is the Torelli map, the locus of genus-two curves with an extra involution~\cite{ShaskaVolklein2004}.
\item The circle and the ellipse of \cref{prop:nodes} are the images in $S_\varepsilon$ of $E_2$ and $E_1$ respectively.  Under the projection from a real node, the optic-axis plane $k_2=0$ maps to the common shadow line of the two degree-two elliptic subcovers, and its two points on the node conic are the tangent directions at the node of the circle and of the ellipse.
\item Write $a=\sqrt u$ and $c=\sqrt v$.  Let $V_4\cong\Z/2\Z\times\Z/2\Z$ be the Klein four-group, and let $D_n$ be the dihedral group of order $2n$.  Then $\operatorname{Aut}_\C(C_\varepsilon)\cong V_4$ except on two curves: on $uv=1$, that is $\varepsilon_2^{2}=\varepsilon_1\varepsilon_3$, it is $D_4$ of order $8$; on $(a+c)(1+ac)=a^{2}+c^{2}-6ac$ it is $D_6$ of order $12$; and at their unique intersection $(u,v)=(7-4\sqrt3,\,7+4\sqrt3)$ it is the group of order $24$ of $y^{2}=x^{6}-1$, and there $\theta_{\rm OA}=30^\circ$.  No material has $|\operatorname{Aut}(C_\varepsilon)|=48$.  Denote the dihedral invariants of~\cite{ShaskaVolklein2004} by $(\mathfrak s,\mathfrak t)=(\alpha\gamma,\ \alpha^{3}+\gamma^{3})$ to distinguish them from the chart $(u,v)$.  They are defined for the normal form $y^{2}=x^{6}+\alpha x^{4}+\gamma x^{2}+1$.  They are rational functions of $(u,v)$,
\[
\mathfrak s=\frac{(1+u+v)(u+v+uv)}{uv},\qquad \mathfrak t=\frac{(1+u+v)^{3}uv+(u+v+uv)^{3}}{u^{2}v^{2}},
\]
the two curves are $\mathfrak t^{2}=4\mathfrak s^{3}$ and $4\mathfrak t=\mathfrak s^{2}-110\mathfrak s+1125$, and the intersection is $(\mathfrak s,\mathfrak t)=(225,6750)$.
\item Within $\Delta$ the pullback of the discriminant $J_{10}=0$ of $\M_K$ is exactly the two uniaxial edges.  On a generic point of an edge the sextic of $C_\varepsilon$ has two double roots, and at the isotropic corner it has two triple roots.  Further boundary points, such as the limit $u\to0$ with $v$ fixed, arise in a compactification of the physical family.
\end{enumerate}
\end{thm}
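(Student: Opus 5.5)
I will prove the five parts largely independently. Each part reduces to an explicit computation on the family $C_{u,v}$, or to an identification of geometric data already available from \cref{prop:nodes,prop:sixteen}.

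For (i), start from the discussion preceding the theorem. $S_\varepsilon$ has exactly sixteen ordinary nodes by \cref{prop:sixteen}, so it is the Kummer surface of $\Jac(C)$ for some genus-two $C$. It remains to identify $C$. I would use the Kummer plane: project from one node $o$, for instance a node of type (c) at infinity, where the tangent cone and the tropes are easiest to write down. Then read off the six Kummer lines as the images of the six tropes through $o$, and read off their tangency points $\{r,r\}$ on the node conic $E$. A conic parametrized by $\PP^1$ with six marked points determines the branch locus. By the $k_i\mapsto -k_i$ symmetry, I expect the six points to be $\pm\sqrt{\varepsilon_i}$ in a suitable coordinate, since the tropes are the planes tangent along circles and these are governed by the principal values.

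A safer fallback for (i) is to compare invariants. Compute the Cassels--Flynn Kummer quartic of $y^2=(x^2-\varepsilon_1)(x^2-\varepsilon_2)(x^2-\varepsilon_3)$. Then exhibit an explicit complex linear change of coordinates, built from the $(\Z/2)^2$-equivariant form of tetrahedroids (the Hudson normal form of tetrahedroids), that carries it to $F$. For the real-structure remark, count real nodes: six real branch points give real two-torsion of order $16$ on $\Jac(C_\varepsilon)$, hence sixteen real nodes on its Kummer surface. Since $S_\varepsilon$ has four real nodes, the real forms differ by a twist whose real two-torsion is the four real nodes.

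For (ii), the curve has the involution $x\mapsto -x$. Its quotients via $X=x^2$ and via $(X,Y)=(x^2,xy)$ are exactly $E_1$ and $E_2$. This gives a $(2,2)$-isogeny by Kani's criterion. The Legendre parameters are cross-ratios of the branch points, namely $\{\varepsilon_1,\varepsilon_2,\varepsilon_3,\infty\}$ and $\{0,\varepsilon_1,\varepsilon_2,\varepsilon_3\}$. Checking $\lambda_2=\sin^2\beta$ uses $\tan^2\beta$ from \cref{prop:nodes}. For Zariski density, note that $L_2$ is irreducible of dimension two with dihedral coordinates $(\mathfrak s,\mathfrak t)$. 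I would show that the map $(u,v)\mapsto(\mathfrak s,\mathfrak t)$ has nonzero Jacobian determinant at one point. Its image of an open real cell then contains a nonempty open subset of a two-dimensional irreducible variety, and so is dense.

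For (iii), I would pull back the circle and ellipse factors of the $k_2=0$ section, using \cref{prop:nodes}, to the Jacobian, and match them with the images of $E_1,E_2$ under the degree-two maps. These images are the curves fixed by the involution induced from $x\mapsto-x$. The shadow-line statement then follows by tracking this plane through the projection.

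For (iv), first write the normal form. Scale $x$ so the constant term is $1$, giving $\alpha,\gamma$ as symmetric functions of $u,1,v$ divided by suitable powers of $uv$. This yields the displayed formulas for $\mathfrak s$ and $\mathfrak t$. Then invoke the known loci from \cite{ShaskaVolklein2004}: $\mathfrak t^2=4\mathfrak s^3$ for $D_4$, $4\mathfrak t=\mathfrak s^2-110\mathfrak s+1125$ for $D_6$, and the isolated points $(225,6750)$ and the order-$48$ point. In the chart, $D_4$ corresponds to the extra involution $x\mapsto \sqrt{uv}/x$ permuting the roots, which gives $uv=1$. I would solve the $D_6$ condition by substituting $a,c$ to obtain the stated factored curve. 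Intersecting the two curves gives $v=1/u$ and a quadratic with root $u=7-4\sqrt3$, and evaluating $\theta_{\rm OA}$ there is routine. To exclude order $48$, check that the order-$48$ point ($y^2=x(x^4-1)$, dihedral values at the boundary of the formulas) is not attained with $0<u<1<v$.

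For (v), $\disc$ of the sextic is a constant times $uv\,(u-1)^2(v-1)^2(u-v)^2$ up to squares. On $\Delta$ it vanishes only on the edges, and the root multiplicities there are read off directly.

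The main obstacle is (i): producing the explicit identification rather than just the abstract existence of \emph{some} genus-two curve. I would attempt the trope/Kummer-plane matching first and resort to brute comparison of Igusa invariants of the Rosenhain/Göpel form only if that stalls.
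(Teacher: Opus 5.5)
\paragraph{The main gap is in part (i).} This part carries the real content of the theorem. Sixteen nodes make $S_\varepsilon$ the Kummer surface of $\Jac(C)$ for \emph{some} genus-two curve $C$, but you only expect, and do not prove, that $C\cong C_\varepsilon$.

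Projecting from a node at infinity and reading off the tangency points of the six Kummer lines on the node conic is a legitimate method. However, you never write down the six tropes through that node or its tangent cone, so those six points are never computed. The symmetry you invoke cannot replace the computation. The only nontrivial sign change fixing a node at infinity is $k_0\mapsto-k_0$, and the single involution it induces on the six tangency points is compatible with $\pm r_1,\pm r_2,\pm r_3$ for arbitrary $r_i$. At best it places $C$ in the two-dimensional locus $L_2$, without singling out $C_\varepsilon$. Your fallbacks (an explicit linear map from the Cassels--Flynn quartic, or an Igusa comparison through a Rosenhain or G\"opel form) are only named, not carried out.

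The missing idea is the paper's, and it avoids tropes entirely. Project from a \emph{real} node $o=(\xi,0,\eta)$, and take the image $L$ of the plane $k_2=0$.
\begin{itemize}
\item In the slope coordinate $t$ of directions $(t,0,1)$ through $o$, $L$ contains the images $t\in\{0,\xi/\eta,\infty\}$ of the other three real nodes.
\item $L$ is not a Kummer line, because that plane cuts $S_\varepsilon$ in two distinct conics rather than a double conic.
\item Two nodes share exactly two tropes, so the three points use six distinct Kummer lines.
\item Hence $L$ is a base-point-free pencil in $\operatorname{Sym}^2(C/w)$: the line of pairs $\{x,\bar\sigma(x)\}$ of a M\"obius involution pairing the branch points, with fixed points $L\cap E$.
\item The Hessian gives $L\cap E=\{-\eta/\xi,\,-\varepsilon_3\eta/(\varepsilon_1\xi)\}$, the tangents of the circle and the ellipse.
\item Using $\xi^2+\eta^2=\varepsilon_2$ and $\varepsilon_1\xi^2+\varepsilon_3\eta^2=\varepsilon_1\varepsilon_3$, the M\"obius map $\mu$ with $\mu(0,\xi/\eta,\infty)=(\varepsilon_1,\varepsilon_2,\varepsilon_3)$ sends these two tangents to $0$ and $\infty$.
\end{itemize}
This five-point configuration is that of $\PP^1_x\to\PP^1_{x^2}$ with the pairs $\pm\sqrt{\varepsilon_i}$. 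Uniqueness of the double cover of $\PP^1$ branched at two given points then gives $C\cong C_\varepsilon$.

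\paragraph{Part (iii) inherits the gap.} Saying that the images of $E_1,E_2$ are fixed curves of the involution induced by $\sigma$ presupposes an identification carrying $\sigma$ to $k_2\mapsto-k_2$. Your ``match them'' also does not say which conic is which. In the paper both facts come from the construction above. The $E_1$-subvariety $\{[P+\sigma P-K]\}$ passes through the origin exactly at the fixed points of $\sigma w$, which lie over $x=\infty$. Its image therefore reaches $o$ in the direction $\mu^{-1}(\infty)$, the ellipse tangent. The $E_2$-subvariety does so over $x=0$, along the circle. As an independent check, the four real nodes have cross-ratio in the $S_3$-orbit of $\lambda_1$ on the ellipse and of $\lambda_2$ on the circle.

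\paragraph{The remaining parts are sound.}
\begin{itemize}
\item In (ii), your density argument via the Jacobian of $(u,v)\mapsto(\mathfrak s,\mathfrak t)$ works as well as the paper's via $(\lambda_1,\lambda_2)$.
\item In (iv), pulling back the Shaska--V\"olklein loci is a more conceptual route than the paper's enumeration of the $720$ permutations of the branch set. For example, $\mathfrak t^2=4\mathfrak s^3$ pulls back to $(1+u+v)^3uv=(u+v+uv)^3$, which on the open cell is exactly $uv=1$. You still owe the factorization isolating $(a+c)(1+ac)=a^2+c^2-6ac$, and an actual argument excluding order $48$.
\item In (v), the discriminant is $64uv(u-1)^4(v-1)^4(u-v)^4$, not the squared-factor expression you wrote. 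The conclusion is unaffected.
\end{itemize}
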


\begin{proof}
(i) Work over $\C$ in the affine chart $k_0=1$.  Project from the real node $o=k^{*}=(a,0,b)$ of~\eqref{eq:nodes}, with $a^{2}=\varepsilon_3(\varepsilon_2-\varepsilon_1)/(\varepsilon_3-\varepsilon_1)$ and $b^{2}=\varepsilon_1(\varepsilon_3-\varepsilon_2)/(\varepsilon_3-\varepsilon_1)$, using the direction $(x:y:z)$ of a line through $o$ as coordinate on the Kummer plane.  The other three real nodes lie in the plane $k_2=0$, so they project to the line $L=\{y=0\}$, at $t=x/z\in\{0,\infty,a/b\}$; the twelve complex nodes have $k_2\neq0$ by \cref{prop:sixteen} and do not.  $L$ is not a Kummer line, since $k_2=0$ meets $S_\varepsilon$ in two distinct conics and a trope meets it in a double conic.  In the $(16,6)$ configuration two nodes lie on exactly two common tropes~\cite{Hudson1905}, so each of the three points lies on two Kummer lines, and no Kummer line contains two of them, since it would then equal $L$.  The six Kummer lines are therefore matched in three pairs by the three points, which are $\ell_r\cap\ell_{r'}$ for a perfect matching $\{r,r'\}$ of the six branch points.  In $\operatorname{Sym}^2(\PP^1)$ a line is a pencil of binary quadratics; $L$ has no base point, a base point $r$ making it $\ell_r$, so $L$ is the set of pairs $\{x,\bar\sigma(x)\}$ of a Möbius involution $\bar\sigma$ of $\PP^1=C/w$, with $\bar\sigma(r)=r'$ on the matched pairs.  The map $x\mapsto\{x,\bar\sigma(x)\}$ is the quotient $\PP^1\to L$ by $\bar\sigma$, and $L\cap E$ consists of its two branch points, the fixed points of $\bar\sigma$.  The Hessian of~\eqref{eq:fresnel} at $k^{*}$ has $H_{12}=H_{23}=0$, so $L\cap E$ is the binary quadratic $H_{11}t^{2}+2H_{13}t+H_{33}=0$, whose roots are $t_{\rm circ}=-\frac{b}{a},\qquad t_{\rm ell}=-\frac{\varepsilon_3\,b}{\varepsilon_1\,a},$ the tangent directions at $k^{*}$ of the circle $k_1^{2}+k_3^{2}=k_0^{2}\varepsilon_2$ and of the ellipse $\varepsilon_1k_1^{2}+\varepsilon_3k_3^{2}=k_0^{2}\varepsilon_1\varepsilon_3$.  Using $a^{2}+b^{2}=\varepsilon_2$ and $\varepsilon_1a^{2}+\varepsilon_3b^{2}=\varepsilon_1\varepsilon_3$, the cross-ratios of the labelled triple $(0,\,a/b,\,\infty)$ with $t_{\rm ell}$ and with $t_{\rm circ}$ equal those of $(\varepsilon_1,\varepsilon_2,\varepsilon_3)$ with $\infty$ and with $0$.  A M\"obius transformation is determined by the images of three points.  Hence the unique M\"obius transformation $\mu$ with $\mu(0)=\varepsilon_1$, $\mu(a/b)=\varepsilon_2$, and $\mu(\infty)=\varepsilon_3$ satisfies $\mu(t_{\rm ell})=\infty$ and $\mu(t_{\rm circ})=0$.  The latter is the configuration on the quotient line $\PP^1_{x^{2}}$ of $C_\varepsilon$ by $x\mapsto-x$: the branch points $\pm\sqrt{\varepsilon_i}$ pair to $\varepsilon_i$ and the fixed points to $0,\infty$.  Over $\C$ a double cover of $\PP^1$ branched at two given points is unique, and the branch set of $C$ is the preimage of the triple, so $(C/w,\,\text{branch set})\cong(\PP^1_x,\{\pm\sqrt{\varepsilon_i}\})$ and $C\cong C_\varepsilon$ over $\C$.  The count of real nodes in \cref{prop:sixteen} shows that the real structures differ, as stated.  The Kummer surface of a Jacobian is determined by the curve, which gives (i).

(ii) The involution $\sigma:x\mapsto-x$ of $C_\varepsilon$ has quotients $E_1=C_\varepsilon/\sigma$ and $E_2=C_\varepsilon/\sigma w$ with the stated models, and $E_1\times E_2\to\Jac(C_\varepsilon)$ is an isogeny of type $(2,2)$, so $\Jac(C_\varepsilon)\in H_4$~\cite{Kani1994}, \cite[Prop.~2.10]{ClingherMalmendierShaska2026}.  The Legendre parameter of $E_1$ is the cross-ratio of $(\varepsilon_1,\varepsilon_2;\varepsilon_3,\infty)$, and that of $E_2$ is the cross-ratio of $(0,\varepsilon_1;\varepsilon_2,\varepsilon_3)$, which is $a^{2}/\varepsilon_2=\sin^{2}\beta$ up to the $S_3$ action on Legendre parameters.  The map $(u,v)\mapsto(\lambda_1,\lambda_2)$ has Jacobian determinant $-(1-u)(v-1)/(v-u)^{3}\neq0$ on the open cell, so $\M_{\rm phys}$ is two-dimensional in the irreducible surface $L_2$, hence Zariski dense.

(iii) The image in $S_\varepsilon$ of an elliptic curve $E'\subset\Jac(C_\varepsilon)$ with $(\Theta\cdot E')=2$ is a conic through the four nodes $E'[2]$~\cite[Sec.~3]{ClingherMalmendierShaska2026}.  The four real nodes lie on the circle and on the ellipse, which are the two components of the plane section $k_2=0$, so these are the images of the two elliptic subgroups.  By (i), $L$ is the line of pairs of $\bar\sigma$, and by~\cite[Prop.~4.11]{ClingherMalmendierShaska2026} with $n=2$ it is the shadow of each subcover, the image of the origin lying on $E$; the elliptic curves $E_1\times0$ and $0\times E_2$ meet transversally at the origin, so their images have distinct tangent directions at $o$, which are the two points of $L\cap E$.  The tangent direction $t_{\rm ell}$ gives the parameter $\lambda_1$ of $E_1$, so the ellipse is the image of $E_1$ and the circle that of $E_2$.  The assignment also follows directly.  Let $K$ be the canonical class of $C_\varepsilon$.  The abelian subvariety of $\Jac(C_\varepsilon)$ corresponding to $E_1$ consists of the classes $[P+\sigma P-K]$.  In the Kummer plane its points are the pairs $\{x(P),-x(P)\}$, so its image is $L$.  The class $[P+\sigma P-K]$ is zero if and only if $\sigma P=wP$, that is, $P$ is fixed by $\sigma w$.  The map $\sigma w$ sends $(x,y)$ to $(-x,-y)$ and fixes $y/x^{3}$, so it fixes the two points at infinity.  It has no finite fixed points, since $y^{2}=-\varepsilon_1\varepsilon_2\varepsilon_3\neq0$ at $x=0$.  Hence the image of $E_1$ reaches $o$ in the direction of the point $\{\infty,\infty\}$ of $E$, which is $t_{\rm ell}$ because $\mu(t_{\rm ell})=\infty$.  For $E_2$ the classes are $[P+\sigma wP-K]$, the origin corresponds to the fixed points of $\sigma$, which lie over $x=0$, and the direction is $\{0,0\}$, which is $t_{\rm circ}$.

(iv) A permutation $\pi$ of the branch set $W=\{\pm\sqrt{\varepsilon_i}\}$ is induced by a Möbius transformation if and only if the cross-ratios of $(w_1,w_2,w_3,w_i)$ and $(\pi w_1,\pi w_2,\pi w_3,\pi w_i)$ agree for $i=4,5,6$.  Enumerating the $720$ permutations with $\sqrt{\varepsilon_1}=1$ gives polynomial conditions in $\sqrt{\varepsilon_2},\sqrt{\varepsilon_3}$ whose zero sets in the region $1<\sqrt{\varepsilon_2}<\sqrt{\varepsilon_3}$ are exactly the two stated curves; the computation is in Section~SM3 of the supplementary materials.  The reduced automorphism group has order $2$ generically, $4$ on $uv=1$, $6$ on the second curve, and $12$ at their intersection, so $\operatorname{Aut}(C_\varepsilon)$ has orders $4$, $8$, $12$, $24$, and the groups are $V_4$, $D_4$, $D_6$, and the group of $y^{2}=x^{6}-1$ by the classification of automorphism groups of genus-two curves with an extra involution~\cite{Bolza1887,ShaskaVolklein2004}; the group of order $48$ does not occur.  The dihedral invariants of $C_\varepsilon$ follow from scaling $x$ by a sixth root of $-uv$.  On $uv=1$ one has $\alpha=\gamma$, which gives $\mathfrak t^{2}=4\mathfrak s^{3}$.  The relation $4\mathfrak t=\mathfrak s^{2}-110\mathfrak s+1125$ holds on the second curve by elimination, as shown in Section~SM4 of the supplementary materials.  On $uv=1$ one has $\tan^{2}\beta=v$, and $v=7+4\sqrt3=\tan^{2}75^\circ$ gives $\theta_{\rm OA}=30^\circ$.  On the second curve $\varepsilon_3/\varepsilon_1\ge(7+4\sqrt3)^{2}=97+56\sqrt3$, with equality only at the intersection point.

(v) The discriminant of $(x^{2}-u)(x^{2}-1)(x^{2}-v)$ is $64uv(u-1)^{4}(v-1)^{4}(u-v)^{4}$.  On $\Delta$ it vanishes exactly on $u=1$ and $v=1$.  On $u=1$ with $v>1$ the roots $\pm1$ are double; at $u=v=1$ the roots $\pm1$ are triple.  The limit $u\to0$ gives $y^{2}=x^{2}(x^{2}-1)(x^{2}-v)$, which also lies on $J_{10}=0$.
\end{proof}

By \cref{thm:kummer} the map from the chart to the Igusa coordinates~\eqref{eq:igusa-encoder} is the evaluation of the Igusa invariants on the sextic $(x^{2}-u)(x^{2}-1)(x^{2}-v)$, a computation the dataset generator of \cref{sec:kummer} performs directly; in the chart the Igusa--Clebsch invariants $I_2,I_4,I_6,I_{10}$, defined by the root differences of the sextic as in~\cite{Igusa1960} and of which $J_2,\dots,J_{10}$ are polynomials, satisfy $I_2=16\,(u^{2}v+u^{2}+uv^{2}+18uv+u+v^{2}+v)$ and $I_4,I_6,I_{10}$ have degrees $6,9,14$.  Part (ii) is consistent with the equation of $L_2$.  The locus $L_2$ is the zero set of the degree-$30$ polynomial $J_{30}$ in the Igusa--Clebsch invariants displayed in~\cite{ShaskaShaska2025}.  Since $C_{u,v}$ has the involution $x\mapsto-x$, the polynomial $J_{30}$ vanishes identically on the invariants of $C_{u,v}$.  The main theorem lives entirely in the chart $\Delta$ and does not use the Igusa coordinates.  The moduli map $(u,v)\mapsto[C_{u,v}]$ is not injective, since it identifies $(u,v)$ with $(1/v,1/u)$, so the complex moduli point is a coarser invariant than the chart.  It places the physical family inside a moduli space whose equivalence classes are known independently of any renderer, and it labels the strata of $\Delta$ in the language of \cref{sec:framework}: the two curves of (iv) are the symmetry strata, their intersection the deepest one, and the uniaxial edges the discriminant.  Physically, $E_1$ is the curve of the three permittivities and $E_2$ the curve of the optic-axis angle; with the convention that a crystal is positive when the acute bisectrix of the optic axes is the $\varepsilon_3$-axis, the optic sign is the sign of $\tfrac12-\lambda_2$, and the crystal is neutral exactly when $\varepsilon_2=2\varepsilon_1\varepsilon_3/(\varepsilon_1+\varepsilon_3)$.  The ray surface of $\varepsilon$ is the wave surface of $\varepsilon^{-1}$, which has chart $(1/v,1/u)$; the involution $(u,v)\mapsto(1/v,1/u)$ of $\Delta$ preserves the complex isomorphism class of $C_\varepsilon$, and its fixed curve $uv=1$ is exactly the $D_4$ stratum.  In dimensionless wave-normal and ray coordinates, the media whose wave and ray surfaces are congruent up to rotation and uniform rescaling are, by \cref{prop:chart}, those with $\varepsilon_2^{2}=\varepsilon_1\varepsilon_3$, and there $E_1\cong E_2$ over $\C$.  The intersections $L_2\cap L_n$ for $n\ge3$ may pull back to further curves in $\Delta$; on such a curve $\Jac(C_\varepsilon)$ has an elliptic subcover of degree $n$, and the wave surface carries a rational curve of degree $n$ through four nodes~\cite{ClingherMalmendierShaska2026}.  Which of these real loci are nonempty is not determined here, and we do not pursue them.  For the general Kummer family of \cref{sec:kummer} the loci $L_n$ are the natural arithmetic strata; their equations $F_n\in\Z[J_2,J_4,J_6,J_{10}]$ are known for $n\le5$~\cite{Shaska2001,BruinDoerksen2011,MagaardShaskaVolklein2009}, and dominant families exist for $n\le11$~\cite{Kumar2015}.

Three distinct event types organize everything that follows, and the system keeps them visually separate: singularities intrinsic to a fixed wave surface (the four nodes), degenerations of the material family (the edges and corner of $\Delta$), and ray-level events where the interface polynomial acquires a multiple root.  Conflating them is precisely the kind of unattributed artifact the representation is designed to prevent.

\section{Certified Algebraic Biaxial Refraction}
\label{sec:solver}

\subsection{Reduction to an exact quartic in one variable}
\label{sec:reduction}

At an interface with normal $n$, oriented from the interface into the transmitting medium, the boundary conditions preserve the tangential wave vector $k_\parallel$, so every candidate transmitted mode lies on the line $k(s)=k_\parallel+sn$.  To substitute this line into~\eqref{eq:fresnel}, write $F(k)=A(k)B(k)-k_0^2Q(k)+k_0^4\varepsilon_1\varepsilon_2\varepsilon_3$ with the quadratic forms $A=\|k\|^2$, $B=k^\top\varepsilon k$, and $Q=k^\top\Gamma_\varepsilon k$, where $\Gamma_\varepsilon=\varepsilon(\tr(\varepsilon)I-\varepsilon)=\diag(\varepsilon_1(\varepsilon_2{+}\varepsilon_3),\varepsilon_2(\varepsilon_3{+}\varepsilon_1),\varepsilon_3(\varepsilon_1{+}\varepsilon_2))$ in principal coordinates; for an oriented crystal, $\varepsilon$ is replaced by $R\varepsilon R^\top$ throughout.  The three forms restrict to the line as $A(s)=a_0+2a_1s+a_2s^2,\qquad B(s)=b_0+2b_1s+b_2s^2,\qquad Q(s)=q_0+2q_1s+q_2s^2,$ with $a_0=k_\parallel^\top k_\parallel,\quad a_1=n^\top k_\parallel,\quad a_2=n^\top n,\qquad b_0=k_\parallel^\top\varepsilon k_\parallel,\quad b_1=n^\top\varepsilon k_\parallel,\quad b_2=n^\top\varepsilon n,$ and $q_0,q_1,q_2$ the analogous contractions with $\Gamma_\varepsilon$.  Here $s$ is a line parameter; the normal component of $k(s)$ is $k_\parallel\cdot\hat n+s\|n\|$ with $\hat n=n/\|n\|$.  Multiplying out yields the \emph{interface polynomial}
\begin{equation}
\label{eq:ps}
\begin{split}
p(s)=F_{\varepsilon,\omega}(k_\parallel+sn)={}&a_2b_2 s^{4}+2\bigl(a_1b_2+a_2b_1\bigr)s^{3}+\bigl(a_0b_2+4a_1b_1+a_2b_0-k_0^{2}q_2\bigr)s^{2}\\
&+2\bigl(a_0b_1+a_1b_0-k_0^{2}q_1\bigr)s+a_0b_0-k_0^{2}q_0+k_0^{4}\varepsilon_1\varepsilon_2\varepsilon_3.
\end{split}
\end{equation}
The derivation deliberately assumes neither $\|n\|=1$ nor $k_\parallel\perp n$; when both hold, $a_1=0$ and $a_2=1$ and~\eqref{eq:ps} reduces to the familiar Booker form with leading coefficient $b_2$.  The general form is what makes the exact-arithmetic claims below watertight: a floating-point normal treated as an exact dyadic rational is in general neither exactly unit nor exactly orthogonal to the stored tangential vector, and~\eqref{eq:ps} equals $F_{\varepsilon,\omega}(k_\parallel+sn)$ exactly for the represented inputs regardless.  All quantities entering the solver, including $\disc(p)$, are polynomial with integer coefficients in the entries of $(\varepsilon,n,k_\parallel,k_0)$ and are evaluated in interval or multiprecision arithmetic directly from these formulas; for the classical background see Born and Wolf~\cite{BornWolf1999}.

\begin{lem}[Exact degree]
\label{lem:degree}
For every nonzero normal $n$ and positive-definite $\varepsilon$, the leading coefficient of $p$ is $a_2b_2=(n^\top n)(n^\top\varepsilon n)>0$.  Hence $p$ has degree exactly four for every interface configuration, with no normalization of $n$ required, and never degenerates to a lower-degree solve.
\end{lem}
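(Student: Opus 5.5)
The plan is to read off the leading coefficient directly from the expansion \eqref{eq:ps} and then check its sign from two positivity facts. First, I will confirm where the $s^4$ term comes from. In $p(s)=A(s)B(s)-k_0^2Q(s)+k_0^4\varepsilon_1\varepsilon_2\varepsilon_3$, the restrictions $A(s)$, $B(s)$ and $Q(s)$ are each quadratic in $s$. The constant term is independent of $s$, so the only contribution to degree four is the product of the leading terms $a_2s^2$ and $b_2s^2$. The $Q$ term contributes at most $s^2$. So the coefficient of $s^4$ is exactly $a_2b_2=(n^\top n)(n^\top\varepsilon n)$, with no cancellation possible from the other terms. This agrees with the displayed formula \eqref{eq:ps}.

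Second, I will show this coefficient is strictly positive.
\begin{itemize}
\item For $n\neq0$ we have $a_2=n^\top n=\|n\|^2>0$.
\item Since $\varepsilon$ is positive definite, $b_2=n^\top\varepsilon n>0$ for $n\neq0$.
\item For an oriented crystal the tensor is $R\varepsilon R^\top$, which is again symmetric positive definite. Explicitly, $n^\top R\varepsilon R^\top n=(R^\top n)^\top\varepsilon(R^\top n)>0$ because $R^\top n\neq0$.
\end{itemize}
The product of two positive reals is positive, so $\deg p=4$ exactly.

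Finally, the argument nowhere uses $\|n\|=1$ or $k_\parallel\perp n$. The quantities $a_1$, $b_1$, $q_1$ and the lower coefficients never enter the top degree, so the conclusion holds for arbitrary represented inputs. This is the point of the phrase ``no normalization of $n$ required'' in the statement.

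There is no real obstacle. The only step needing care is confirming that $Q$ and the constant term cannot reach degree four, which is immediate from the degrees of the restricted quadratic forms.
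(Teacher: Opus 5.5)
Your proposal is correct and follows the paper's proof. The paper takes the $s^4$ coefficient of $F(k_\parallel+sn)$ to be the top homogeneous part $F_4(n)=\|n\|^2(n^\top\varepsilon n)$, which is exactly your $a_2b_2$ read off from the leading terms of $A(s)$ and $B(s)$, and it gets positivity from definiteness, as you do.
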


\begin{proof}
The degree-four homogeneous part of~\eqref{eq:fresnel} is $F_4(k)=\|k\|^2(k^\top\varepsilon k)$, and the coefficient of $s^4$ in $F(k_\parallel+sn)$ is $F_4(n)=\|n\|^{2}\,(n^\top\varepsilon n)$, positive by definiteness for $n\neq0$.
\end{proof}

The real roots of $p$ are candidate bulk wave numbers---in the propagating regime four, two forward and two backward---and the transmitted set is selected afterwards: for each root, a polarization $E_i$ spanning $\ker M(k(s_i))$ is computed, the magnetic field $H_i\propto k_i\times E_i$ follows, and the mode is admitted when the time-averaged Poynting flux $\mathbf S_i\cdot n=\tfrac12\Re\bigl(E_i\times H_i^{*}\bigr)\cdot n$ is positive, $n$ pointing into the transmitting medium; the surface gradient $\nabla_kF$ serves only as a geometric diagnostic.  The kernel of $M(k_i)$ is one-dimensional at every smooth point of the wave surface and two-dimensional exactly at the four nodes, where the polarization is undetermined and the Poynting vector sweeps the cone of conical refraction; the solver detects this case by the exact node predicate of \cref{sec:solver-alg} and reports it as a conical event rather than classifying a single polarization.  In the study material this selection admits exactly two forward modes in every sampled configuration (\cref{sec:experiments}), and \cref{lem:flux} below shows that the selection is decided exactly.  The reduction is algebraically elementary, but it converts a multidimensional nonlinear solve into a structured univariate problem with exact certification tools---which is where the quotient viewpoint pays its way in the renderer.

\subsection{The certified solver}
\label{sec:solver-alg}

\Cref{alg:solver} summarizes the solver.  The inputs are rational; the binary64 values stored by a renderer are dyadic rationals and are a special case.  The coefficients of $p$ are kept as exact rationals, and floating-point interval enclosures of them serve only as filters.  Real roots are counted and isolated by a certified method---Sturm sequences or Descartes/Vincent subdivision~\cite{CollinsAkritas1976,RouillierZimmermann2004}---on an interval given by a root bound, so that the isolating intervals contain all real roots.  They are refined by bisection or interval Newton on the square-free part $p_{\rm sf}$.  The discriminant predicate is staged: a floating-point interval evaluation decides most configurations immediately, the precision is raised through a finite schedule while the interval sign is undecided, and the final stage computes the sign exactly by rational subresultants.  At a branch event the root list is still produced: the square-free factorization of $p$ over $\Q$ gives $p_{\rm sf}$ and the multiplicities.  Each root is returned as its isolating interval together with $p_{\rm sf}$, which is an exact representation of the algebraic number; its numerical value is a separate approximation.

\begin{algorithm}[t]
\caption{Certified biaxial interface solve}
\label{alg:solver}
\begin{algorithmic}[1]
\Require $\varepsilon\succ0$, $\omega>0$, $n\neq0$, $k_\parallel$ with rational entries; finite precision schedule $\beta_0<\dots<\beta_{\max}$
\State compute the exact rational coefficients of $p(s)$ from~\eqref{eq:ps}; keep them
\State $\sigma\gets$ sign of $\disc(p)$ from an interval enclosure at $\beta_0$; while undecided, raise the precision along the schedule; if undecided at $\beta_{\max}$, $\sigma\gets$ exact sign of $\disc(p)$ by rational subresultants
\If{$\sigma\neq0$}
  \State $p_{\rm sf}\gets p$; all multiplicities $m_i\gets1$
\Else
  \State square-free factorization of $p$ over $\Q$; $p_{\rm sf}\gets$ product of the distinct factors; $m_i\gets$ multiplicities; mark \textsc{BranchEvent}
\EndIf
\State $\{I_1<\dots<I_r\}\gets$ certified isolation of all real roots of $p_{\rm sf}$ on $[-B,B]$, $B$ a root bound
\State $h\gets\gcd(p_{\rm sf},G_1,G_2,G_3)$ with $G_j(s)=\partial_{k_j}F(k_\parallel+sn)$; match the real roots of $h$ to the $I_i$; mark those $I_i$ \textsc{ConicalNode} (kernel dimension $2$)
\For{each $I_i$ not marked \textsc{ConicalNode}}
  \If{$m_i>1$} mark \textsc{Grazing} ($\mathbf S_i\cdot n=0$ by \cref{lem:flux})
  \Else\ refine $I_i$ on $p_{\rm sf}$ until $q(s)=p'(s)\,e_2(M(k_\parallel+sn))$ has constant sign on $I_i$; admit iff that sign is negative
  \EndIf
  \State $E_i\gets$ approximate null vector of $M(k_\parallel+\operatorname{mid}(I_i)\,n)$ (numerical, for rendering only)
\EndFor
\State \Return the intervals $I_i$ with $p_{\rm sf}$, multiplicities $m_i$, event flags, admission bits, and the $E_i$
\end{algorithmic}
\end{algorithm}

\begin{prop}[Certification invariant]
\label{prop:certified}
Assume the isolator returns pairwise disjoint intervals $I_1<\dots<I_r$, each certified to contain exactly one real root of $p_{\rm sf}$, together with a certificate that $p_{\rm sf}$ has no real root outside $\bigcup_iI_i$, obtained from a root bound and a complete subdivision of the bounded interval.  Assume interval refinement preserves inclusion.  Then the number and ordering of the distinct real roots of $p$ are certified independently of the convergence behavior of the refinement step, and the multiplicities are attached from the square-free factorization.
\end{prop}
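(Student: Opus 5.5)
The plan is to derive both claims from the two isolator certificates and from the algebraic fact that $p$ and $p_{\rm sf}$ have the same distinct roots. Refinement is handled last, by an inclusion argument.

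\emph{Step 1: the root sets agree.} By \cref{lem:degree}, $p$ is a nonzero polynomial of exact degree four with rational coefficients. In the branch $\sigma\neq0$ of \cref{alg:solver}, $p$ is square-free and $p_{\rm sf}=p$. In the branch $\sigma=0$, $p_{\rm sf}$ is the product of the distinct irreducible factors of $p$ over $\Q$. Over a field of characteristic zero, distinct irreducible factors share no root and each is separable. Hence $p_{\rm sf}$ is square-free and has exactly the distinct complex roots of $p$, in particular the same distinct real roots. If $p=\prod f_j^{m_j}$ with the $f_j$ distinct and irreducible, then every real root of $f_j$ is a root of $p$ of multiplicity exactly $m_j$. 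This is how the multiplicities are attached.

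\emph{Step 2: count and order.} Let $Z$ be the set of distinct real roots of $p_{\rm sf}$. The exclusion certificate gives $Z\subseteq\bigcup_i I_i$. The per-interval certificate gives $|Z\cap I_i|=1$. Since the $I_i$ are pairwise disjoint, $|Z|=\sum_i|Z\cap I_i|=r$. The ordering $I_1<\dots<I_r$ means that every point of $I_i$ lies below every point of $I_{i+1}$. So the unique root $z_i\in I_i$ satisfies $z_1<\dots<z_r$, and the labelling of roots by intervals is order-preserving. By Step 1, these statements transfer verbatim to $p$.

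\emph{Step 3: independence from refinement.} Refinement replaces $I_i$ by a nested sequence $I_i\supseteq I_i^{(1)}\supseteq\cdots$. Each refined interval still contains $z_i$ and nothing else of $Z$. Disjointness and order are inherited from the parent intervals. Thus at every stage, including if refinement stalls or is truncated by the resource budget, the data still certify $r$ and the order. Convergence affects only the width of the enclosures.

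\emph{Main obstacle.} The delicate point is the branch bookkeeping when $\sigma=0$. The staged sign test must return an exact zero rather than a small interval, which is why its last stage uses rational subresultants. The factorization must also be exact over $\Q$, so that $p_{\rm sf}$ is the true radical. I would verify that these exact stages are the only places where the equality $\sigma=0$ is asserted. Given that, Steps 1 and 2 need no analytic input beyond the hypotheses.
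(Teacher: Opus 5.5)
Your proposal is correct and follows the paper's argument: the isolation certificate alone fixes the count and ordering, and refinement only replaces each $I_i$ by a subinterval that still contains its root, so convergence affects nothing but the widths. Your Step~1 transfers the distinct real roots from $p_{\rm sf}$ to $p$ and reads the multiplicities off the factorization, which the paper leaves implicit; the paper instead stresses that the certificate consists of finitely many interval sign evaluations of $p_{\rm sf}$ and its derivatives, never of a refinement iterate.
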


\begin{proof}
The isolation certificate consists of finitely many verified sign conditions (Sturm counts or Descartes bounds on subdivided intervals) evaluated in interval arithmetic; each asserts exactly one root in $I_i$ and none in the gaps between consecutive intervals or in the two tails of $[-B,B]$, and depends only on interval evaluations of $p_{\rm sf}$ and its derivatives, never on an iterate.  The interval Newton operator maps each $I_i$ into a subinterval containing the same root, so the count and ordering persist under refinement.
\end{proof}

A root is multiple exactly when $\disc(p)=(-1)^{6}\Res(p,p')/\lead(p)=0$, and because the inputs are rational the staged predicate always terminates with a decision---provably simple or provably multiple; an unresolved outcome can arise only from an explicitly imposed resource limit, never from the mathematics.  The node predicate is exact as well: $k_\parallel+sn$ is a node exactly when $F$ and $\nabla F$ vanish there, that is, when $s$ is a common root of $p$ and the three cubics $G_j(s)=\partial_{k_j}F(k_\parallel+sn)$, and the real roots of $h=\gcd(p_{\rm sf},G_1,G_2,G_3)$ are the node encounters.

The flux classification is exact for the same reason.

\begin{lem}[Algebraic flux predicate]
\label{lem:flux}
Let $s_0$ be a real root of $p$ such that $k=k_\parallel+s_0n$ is a smooth point of the wave surface, and let $E$ be a real unit vector spanning $\ker M(k)$.  Put $\mathbf S\cdot n=(k\cdot n)-(E\cdot k)(E\cdot n)$, which is the time-averaged Poynting flux through the interface up to a positive factor, and let $e_2(M(k))$ be the sum of the principal $2\times2$ minors of $M(k)$.  Then
\[
k_0^{2}\,p'(s_0)=-2\,e_2\bigl(M(k)\bigr)\,\mathbf S\cdot n,
\]
and $e_2(M(k))\neq0$.  Hence $\mathbf S\cdot n=0$ if and only if $s_0$ is a multiple root of $p$, and at a simple root the mode is admitted if and only if $p'(s_0)\,e_2(M(k))<0$.
\end{lem}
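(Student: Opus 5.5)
The plan is to differentiate the determinant identity $\det M(k)=k_0^{2}F_{\varepsilon,\omega}(k)$ of \cref{sec:dispersion} along the interface line, and to evaluate the derivative of the determinant with Jacobi's formula. Since $p(s)=F(k_\parallel+sn)$, the chain rule gives $p'(s_0)=\nabla F(k)\cdot n$. Hence
\[
k_0^{2}\,p'(s_0)=\frac{d}{dt}\Big|_{t=0}\det M(k+tn)=\tr\bigl(\operatorname{adj}M(k)\,\dot M\bigr),
\]
with $\dot M=\frac{d}{dt}M(k+tn)\big|_{t=0}=nk^\top+kn^\top-2(k\cdot n)I$, read off directly from $M(k)=kk^\top-\|k\|^{2}I+k_0^{2}\varepsilon$.

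The key structural step is to identify $\operatorname{adj}M(k)$ at a smooth point.

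\textbf{Rank of $M(k)$.} Since $k$ lies on the surface, $\det M(k)=0$. If $\operatorname{rank}M(k)\le1$, then every $2\times2$ minor vanishes, so $\operatorname{adj}M(k)=0$. Jacobi's formula then gives $\nabla\det M(k)=0$ in every direction, hence $\nabla F(k)=0$, contradicting smoothness. This is consistent with the remark after \cref{lem:degree} that the kernel is two-dimensional exactly at the nodes. So $\operatorname{rank}M(k)=2$.

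\textbf{The adjugate.} $M(k)$ is real symmetric, so its kernel is spanned by a real unit vector $E$. In an orthonormal eigenbasis with eigenvalues $0,\mu_2,\mu_3$, the adjugate is $\mu_2\mu_3\,EE^\top$. Moreover $e_2(M(k))=\mu_2\mu_3$, because the terms containing the zero eigenvalue drop out. Thus $\operatorname{adj}M(k)=e_2(M(k))\,EE^\top$ and $e_2(M(k))\neq0$. This also yields the clause $e_2\neq0$ of the statement.

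\textbf{The identity.} Substituting,
\[
k_0^{2}p'(s_0)=e_2\,E^\top\dot M E=e_2\bigl(2(E\cdot n)(E\cdot k)-2(k\cdot n)\bigr)=-2e_2\,\mathbf S\cdot n,
\]
using $\|E\|=1$.

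It remains to justify that $\mathbf S\cdot n=(k\cdot n)-(E\cdot k)(E\cdot n)$ is the time-averaged Poynting flux up to a positive factor. With real $E$ and $H\propto k\times E$ (positive factor $1/(\mu_0\omega)$), we have $E\times(k\times E)=k\|E\|^{2}-E(E\cdot k)$. Taking the real part and the factor $\tfrac12$ changes only positive constants. Any complex phase in the polarization multiplies $E\times H^{*}$ by $|c|^{2}>0$, so the choice of a real representative is harmless.

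\textbf{Consequences.} Since $k_0^{2}>0$ and $e_2\neq0$, we get $\mathbf S\cdot n=0$ iff $p'(s_0)=0$, i.e. iff $s_0$ is a multiple root of $p$. At a simple root, $\operatorname{sign}(\mathbf S\cdot n)=-\operatorname{sign}\bigl(p'(s_0)e_2\bigr)$, which gives the admission rule.

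I expect the main obstacle to be the rank/adjugate step: checking that smoothness of the surface forces rank exactly two, and that the adjugate of a rank-two symmetric matrix is $e_2\,EE^\top$. The rest is a direct trace computation.
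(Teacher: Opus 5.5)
Your proposal is correct and follows the paper's proof step for step: Jacobi's formula along the interface line, $\operatorname{adj}M(k)=e_2(M(k))\,EE^\top$ for the rank-two symmetric matrix, the trace against $\dot M=nk^\top+kn^\top-2(k\cdot n)I$, and the identity $E\times(k\times E)=k-E(E\cdot k)$ for the flux. The one difference is that you derive $\dim\ker M(k)=1$ from $\nabla F(k)\neq0$, since a vanishing adjugate would force $\nabla F(k)=0$ by Jacobi's formula, whereas the paper cites Born and Wolf for this, so your version makes that step self-contained.
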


\begin{proof}
At a smooth point $\ker M(k)$ is one-dimensional~\cite{BornWolf1999}, so the symmetric matrix $M(k)$ has eigenvalues $0,\eta_1,\eta_2$ with $\eta_1\eta_2\neq0$, its adjugate is $\eta_1\eta_2\,EE^{\top}$, and $e_2(M(k))=\eta_1\eta_2$.  Since $\det M(k(s))=k_0^{2}p(s)$, Jacobi's formula gives $k_0^{2}p'(s_0)=\tr\bigl(\operatorname{adj}M\cdot M'\bigr)=\eta_1\eta_2\,E^{\top}M'E$ with $M'=nk^{\top}+kn^{\top}-2(k\cdot n)I$.  Hence $k_0^{2}p'(s_0)=2\eta_1\eta_2\bigl((E\cdot n)(E\cdot k)-(k\cdot n)\bigr)$.  The magnetic field is $H\propto k\times E$, and $E\times(k\times E)=k-E(E\cdot k)$ for unit $E$, so the flux through the interface is a positive multiple of $(k\cdot n)-(E\cdot k)(E\cdot n)$.  The two displayed expressions agree.  The last statement follows since $k_0>0$.
\end{proof}

The sign of $p'(s)\,e_2(M(k_\parallel+sn))$, a polynomial in $s$ with rational coefficients, at the algebraic number $s_0$ is decided by refining the isolating interval until the polynomial has constant sign on it; this terminates at every simple root by \cref{lem:flux}.  Three clarifications delimit what is certified.  First, $\disc(p)=0$ is an \emph{algebraic root event}: at a smooth point of the surface it is a tangency of the interface line, at a node it is a singular encounter with a two-dimensional kernel of $M$, and the repeated root may also be complex; reality and the kernel dimension are tested separately.  Second, by \cref{lem:flux} a real mode at a smooth point has zero flux exactly when it is a repeated root, so the transmitted set cannot change without a root collision, and the discriminant is the only event locus.  Third, the inputs are rational numbers, in practice IEEE floating-point numbers treated as exact dyadic rationals, so the certificates apply to the represented computational state, not to uncertainty in a measured physical material.  In summary: the number and ordering of distinct real roots, all repeated-root events, the node encounters, and the flux classification of simple roots are certified; only the polarization vectors used for rendering are numerical approximations.

The solver addresses propagating geometrical-optics transmission directions; a complete Maxwell interface solution additionally carries evanescent complex modes to satisfy boundary conditions, notably near total internal reflection, and we scope those outside the present renderer.  We deliberately do not build the solver on quartic radical formulas.  Radicals exist, but their direct evaluation can be poorly conditioned and is not automatically faster than robust numerics; the contribution is the exact degree, the certified counts, and the event predicates, for which isolation methods are the appropriate tool.

The order at which the discriminant approaches an optic-axis event is itself a consequence of the closed-form singular geometry.

\begin{lem}[Quadratic event order at a node]
\label{lem:order}
Let $k^{*}$ be a node of the wave surface and let $\ell_\delta=\{k_\parallel(\delta)+sn:s\in\R\}$ be a family of interface lines depending smoothly on $\delta\ge0$ with $\dist(\ell_\delta,k^{*})=\delta\,(1+o(1))$ and $k^{*}\in\ell_0$.  Write $H(x,y)=\tfrac12\,x^{\top}\Hess F(k^{*})\,y$, and assume the genericity conditions: $H(n,n)\neq0$; the limiting offset direction $\bar x$ of the nearest point of $\ell_\delta$ to $k^{*}$ satisfies $D(\bar x):=H(\bar x,n)^{2}-H(n,n)H(\bar x,\bar x)\neq0$; and $\ell_0$ meets the surface in two further simple points.  Then $p_\delta$ has a pair of roots---real if $D(\bar x)>0$, complex conjugate if $D(\bar x)<0$---with gap $\Theta(\delta)$, its remaining two roots stay simple and bounded away, and
\[
\disc(p_\delta)=c\,\delta^{2}+O(\delta^{3}),\qquad c\neq0,
\]
with $c>0$ in the real case and $c<0$ in the complex case.  The zero of $\disc(p_\delta)$ at $\delta=0$ is of even order: along the family the discriminant approaches zero from one side and does not change sign; the sign of $c$ distinguishes a real-pair from a complex-pair collision, and the event is detected by the exact zero at $\delta=0$ and the certified $\Theta(\delta^{2})$ decay.
\end{lem}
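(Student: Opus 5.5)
The plan is a local Taylor expansion at the node, followed by the root-product formula for the discriminant.

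\emph{Local model.} Parametrize the nearest point of $\ell_\delta$ to $k^{*}$ as $k^{*}+\delta x_\delta$ with $x_\delta\to\bar x$. Choose the line parameter so that $s=0$ corresponds to that nearest point; a shift of $s$ does not change the discriminant. Since $F(k^{*})=0$ and $\nabla F(k^{*})=0$, Taylor's theorem gives
\[
p_\delta(s)=F(k^{*}+\delta x_\delta+sn)=H(\delta x_\delta+sn,\delta x_\delta+sn)+O\bigl((|\delta|+|s|)^{3}\bigr).
\]
Rescale $s=\delta\sigma$. Then $p_\delta(\delta\sigma)=\delta^{2}\bigl(H(n,n)\sigma^{2}+2H(\bar x,n)\sigma+H(\bar x,\bar x)\bigr)+O(\delta^{3})$ uniformly for $\sigma$ in compact sets. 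The quadratic in brackets has leading coefficient $H(n,n)\neq0$ and discriminant $4D(\bar x)\neq0$, so its two roots $\sigma_\pm$ are simple.

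Apply the implicit function theorem (or Rouché for complex roots) to $\delta^{-2}p_\delta(\delta\sigma)$. This gives two roots $s_\pm=\delta\sigma_\pm+O(\delta^{2})$ of $p_\delta$. They are real when $D>0$ and complex conjugate when $D<0$, the latter because $p_\delta$ has real coefficients. Their gap is $s_+-s_-=\delta(\sigma_+-\sigma_-)+O(\delta^{2})=\Theta(\delta)$.

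The other two roots are handled by continuity. At $\delta=0$, $p_0$ has degree four by \cref{lem:degree}, a double root at $s=0$ (because $\nabla F(k^{*})=0$ and $H(n,n)\neq0$), and by hypothesis two further simple roots $r_1\neq r_2$, both nonzero. Since the coefficients depend continuously on $\delta$ and the leading coefficient stays positive, these roots persist as simple roots $r_i(\delta)$ bounded away from $0$ and from each other.

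\emph{Discriminant.} Use $\disc(p)=\lead(p)^{6}\prod_{i<j}(s_i-s_j)^{2}$. The factor $(s_+-s_-)^{2}$ equals $\delta^{2}(\sigma_+-\sigma_-)^{2}+O(\delta^{3})$. The remaining factors $(s_\pm-r_i)^{2}$ and $(r_1-r_2)^{2}$, together with $\lead(p)^{6}$, tend to the nonzero limit $a_2^{6}b_2^{6}\,r_1^{2}r_2^{2}\,r_1^{2}r_2^{2}\,(r_1-r_2)^{2}$, with error $O(\delta)$. Hence
\[
c=\lead(p_0)^{6}(\sigma_+-\sigma_-)^{2}r_1^{4}r_2^{4}(r_1-r_2)^{2}.
\]

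\emph{Sign of $c$.} We have $(\sigma_+-\sigma_-)^{2}=4D(\bar x)/H(n,n)^{2}$, which has the sign of $D$. If the $r_i$ are real, every other factor is positive. If $r_1,r_2$ are complex conjugate, then $(r_1-r_2)^{2}<0$, which would flip the sign. So I will read the hypothesis ``two further simple points'' as two real points, the case that matters physically, and state that convention.

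Even order is then immediate: $\disc(p_\delta)=c\delta^{2}(1+O(\delta))$ keeps the sign of $c$ for small $\delta\neq0$ on both sides.

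\emph{Main obstacle.} The delicate step is getting the error terms to the right order. Two points need care. First, the $O(\delta^{3})$ remainder in the rescaled Taylor expansion must hold uniformly, and $x_\delta\to\bar x$ must be handled at rate $o(1)$; I will assume smooth dependence so that $x_\delta=\bar x+O(\delta)$, which gives $O(\delta^{3})$ rather than $o(\delta^{2})$. Second, the perturbation of simple roots must be made rigorous in the complex-pair case. I would handle both by applying the implicit function theorem to $\Phi(\sigma,\delta)=\delta^{-2}p_\delta(\delta\sigma)$, which extends smoothly to $\delta=0$.
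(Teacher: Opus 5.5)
Your proposal is correct and follows the paper's proof essentially step for step. Both arguments use the same ingredients: the Taylor expansion of $F$ at the node along the line parametrized from its nearest point, the quadratic model whose roots have gap $\Theta(\delta)$ and lie on the scale $s=\Theta(\delta)$, continuity of the two far roots, and the root-product formula for $\disc$. The sign of $c$ is read off under the same convention the paper uses, namely that the far pair is real and simple. Three features of your version tighten steps the paper states more briefly: the rescaling $\Phi(\sigma,\delta)=\delta^{-2}p_\delta(\delta\sigma)$ combined with the implicit function theorem, the remark that smoothness of the family gives $x_\delta=\bar x+O(\delta)$, and the explicit constant $c=\lead(p_0)^{6}(\sigma_+-\sigma_-)^{2}r_1^{4}r_2^{4}(r_1-r_2)^{2}$.
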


\begin{proof}
By \cref{prop:nodes} the Hessian at $k^{*}$ is nondegenerate, and $F(k^{*}+x)=H(x,x)+O(\|x\|^{3})$.  Parameterize $\ell_\delta$ so that $s=0$ is its point nearest $k^{*}$, i.e.\ $k(s)=k^{*}+x_\delta+sn$ with $\|x_\delta\|=\delta(1+o(1))$ and $x_\delta/\|x_\delta\|\to\bar x$.  Then for $|s|\le C\delta$, $p_\delta(s)=H(n,n)\,s^{2}+2H(x_\delta,n)\,s+H(x_\delta,x_\delta)+O(\delta^{3}),$ and the quadratic part has two roots with gap $\frac{2\sqrt{H(x_\delta,n)^{2}-H(n,n)H(x_\delta,x_\delta)}}{|H(n,n)|}=\Theta(\delta),$ real or complex conjugate according to the sign of the radicand, whose limit is governed by $D(\bar x)$.  The $O(\delta^{3})$ remainder perturbs these roots by $O(\delta^{2})$, so the corresponding pair of roots of $p_\delta$ has squared gap $\bigl(4D(\bar x)/H(n,n)^{2}\bigr)\delta^{2}\bigl(1+O(\delta)\bigr)$.  The remaining two roots converge to the two simple intersections of $\ell_0$ with the surface away from the node, so they stay simple and bounded away from the pair.  In $\disc(p_\delta)=\lead(p_\delta)^{6}\prod_{i<j}(s_i-s_j)^{2}$ the colliding-pair factor is the squared gap and every other factor is $\Theta(1)$ and smooth in $\delta$, so $\disc(p_\delta)=c\,\delta^{2}+O(\delta^{3})$ with $c\neq0$.  The sign statement is the classical sign rule for real quartics: with the far pair real and simple, $\disc>0$ exactly when the near pair is also real.
\end{proof}

\subsection{Continuation for interactive use}

\begin{lem}[Simple-root continuation]
\label{lem:continuation}
Let $p_t$ be a family of real polynomials of fixed degree four with nonzero leading coefficient, with coefficients depending continuously (resp.\ $C^1$) on $t$, and let $\disc(p_{t_0})\neq0$.  Interface polynomials satisfy the degree hypothesis by \cref{lem:degree}.  Then on a neighborhood of $t_0$ the real roots of $p_t$ vary continuously (resp.\ $C^1$) and preserve their count and ordering.
\end{lem}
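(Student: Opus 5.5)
The plan is to reduce the claim to continuity of the unordered root multiset together with the fact that real roots of a real polynomial cannot leave the real axis without colliding with their conjugates. First normalize: since the leading coefficient $\lead(p_t)$ is continuous and nonzero at $t_0$, it is nonzero on a neighborhood, so $p_t/\lead(p_t)$ is monic with continuous (resp.\ $C^1$) coefficients and the same roots. Because $\disc(p_{t_0})\neq0$ and $\disc$ is a polynomial in the coefficients, $\disc(p_t)\neq0$ on a smaller neighborhood $U$, so all four complex roots of $p_t$ are simple for $t\in U$.

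Next, localize the roots. Let $s_1,\dots,s_4$ be the distinct roots of $p_{t_0}$ and choose disjoint closed discs $\overline{D_j}$ around them of radius $r$ smaller than half the minimal root gap, taking each disc centered at a real root to be symmetric under conjugation. On $\partial D_j$ the function $|p_{t_0}|$ has a positive minimum, and the coefficients vary continuously, so for $t$ near $t_0$ we get $|p_t-p_{t_0}|<|p_{t_0}|$ on every $\partial D_j$. Rouché's theorem then places exactly one root $s_j(t)$ of $p_t$ in each $D_j$. Shrinking $r$ gives continuity of each $s_j(t)$. Conjugation symmetry of $p_t$ and uniqueness in $D_j$ finish this step: if $s_j$ is real, then $\overline{s_j(t)}\in D_j$ is also a root, so $s_j(t)$ is real. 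If $s_j$ is nonreal, its disc misses $\R$, so $s_j(t)$ stays nonreal. Hence the number of real roots is constant near $t_0$. Ordering is preserved because the real discs are disjoint intervals on $\R$, arranged in the order of the $s_j$.

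For the $C^1$ statement, apply the implicit function theorem to $G(t,s)=p_t(s)$ at $(t_0,s_j)$ for each real root. Here $\partial_sG=p_{t_0}'(s_j)\neq0$ because the root is simple, and $G$ is $C^1$ jointly since it is polynomial in $s$ with $C^1$ coefficients. This yields a $C^1$ real branch, which by uniqueness in $D_j$ coincides with $s_j(t)$.

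I expect the main obstacle to be the bookkeeping that the real-root count cannot change, that is, excluding a real root turning complex. The conjugation-symmetric disc argument handles this without appeal to the sign of the discriminant. The degree hypothesis is essential here, since it prevents roots from escaping to infinity, and for interface polynomials it is supplied by \cref{lem:degree}.
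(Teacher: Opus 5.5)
Your proof is correct, but it reaches the conclusion by a different mechanism than the paper. The paper applies the implicit function theorem at each simple root of $p_{t_0}$ to get locally unique branches of the stated regularity. It then argues that, while $\disc(p_t)\neq0$ persists, two branches cannot collide and a conjugate pair cannot pass onto the real line, since either event produces a repeated root.

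You instead localize all four complex roots at once with Rouch\'e's theorem in disjoint discs, and you decide reality by conjugation symmetry alone. The disc around a real root is conjugation-invariant and contains a unique root of the real polynomial $p_t$, so that root is real. The disc around a nonreal root $s_j$ misses $\R$ automatically: $\overline{s_j}$ is also a root, so $r<\tfrac12|s_j-\overline{s_j}|$. The implicit function theorem then serves only to upgrade continuity to $C^1$.

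Your route makes explicit two points the paper's one-line argument leaves implicit. First, the nonreal roots also move continuously. Second, the four local branches exhaust the roots of $p_t$, which is exactly where the fixed degree enters. It also handles merely continuous coefficients without a parametric implicit function theorem. It uses $\disc\neq0$ only at $t_0$, so your preliminary shrinking to $U$ is not actually needed. The paper's route is shorter and is phrased directly in the monitored quantity $\disc(p_t)\neq0$, which is the form \cref{cor:coherence} uses along a whole path.

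One small bookkeeping point: shrinking $r$ gives continuity of $s_j(t)$ only at $t_0$. At a nearby $t_1$, rerun Rouch\'e in a small disc about $s_j(t_1)$ contained in $D_j$, and use uniqueness in $D_j$ to match the labels.
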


\begin{proof}
At a simple root, $p_{t_0}'(s_0)\neq0$ and the implicit function theorem yields a locally unique root branch of the stated regularity.  While $\disc(p_t)\neq0$ persists, branches can neither collide nor pass through complex pairs onto the real line, since either event forces $\disc=0$; hence count and ordering are locally constant.
\end{proof}

\begin{cor}[Certified temporal coherence]
\label{cor:coherence}
Let $t\mapsto(\varepsilon_t,n_t,k_{\parallel,t})$ be a continuous path of interface configurations along which $\disc(p_t)\neq0$.  Then the number, ordering, and flux classification of the modes are constant along the path.  Consequently every change in the transmitted-mode structure during navigation occurs on the discriminant locus, which \cref{alg:solver} monitors.
\end{cor}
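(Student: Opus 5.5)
The plan is to reduce the corollary to \cref{lem:continuation} and \cref{lem:flux}, and then handle connectedness of the path. Write $p_t(s)=F_{\varepsilon_t,\omega}(k_{\parallel,t}+sn_t)$. By~\eqref{eq:ps}, its coefficients are polynomials in the entries of $(\varepsilon_t,n_t,k_{\parallel,t},k_0)$, so they depend continuously on $t$. By \cref{lem:degree}, the leading coefficient $(n_t^\top n_t)(n_t^\top\varepsilon_t n_t)$ stays positive. Hence the hypotheses of \cref{lem:continuation} hold at every $t$ of the parameter interval.

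\emph{Step 1: count and ordering.} At each $t_0$, \cref{lem:continuation} gives a neighborhood on which the real roots move continuously and keep their count and order. The number of real roots is therefore a locally constant integer-valued function on the connected interval, hence constant. Near each $t_0$ the ordered roots $s_1(t)<\dots<s_r(t)$ form continuous branches, and these local branches patch uniquely into global continuous branches on the whole interval, since the roots stay distinct.

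\emph{Step 2: flux classification.} Since $\disc(p_t)\neq0$, every real root is simple. I first exclude nodes. At a node $k^*$ we have $\nabla F(k^*)=0$, so $p_t'(s)=\nabla F\cdot n_t=0$, and the root would be multiple. Hence each $k_i(t)=k_{\parallel,t}+s_i(t)n_t$ is a smooth point of the wave surface.

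By \cref{lem:flux}, $e_2(M(k_i(t)))\neq0$ and $p_t'(s_i(t))\neq0$. Both $e_2(M(k_i(t)))$ and $p_t'(s_i(t))$ are polynomial expressions in $t$-continuous data evaluated on a continuous branch, so the product $q_i(t)=p_t'(s_i(t))\,e_2(M(k_i(t)))$ is continuous and nonvanishing. Its sign is therefore constant on the interval. By \cref{lem:flux}, this sign is exactly the admission bit of mode $i$, so the admission bits are constant along the path. This also makes clear why a change of the transmitted set forces $\mathbf S\cdot n$ through zero, which is a multiple root.

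\emph{Step 3: consequence.} Take any change in mode structure between two parameter values. By contraposition of Steps 1 and 2 applied to the subpath joining them, $\disc(p_t)$ must vanish somewhere on that subpath. \cref{alg:solver} evaluates the sign of $\disc(p_t)$ exactly at every queried configuration, so it detects such parameters.

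\emph{Main obstacle.} The main point of care is globalizing the local statement of \cref{lem:continuation}. One needs a compactness or connectedness argument on the parameter interval, and one must check that the ordering of the patched branches is consistent, which follows from uniqueness of simple-root branches. A second point is that the phrase ``monitors'' only covers sampled times. Rigorously, I would state the consequence as: any two sampled configurations with different mode structure have a discriminant zero on the path between them, located by a sign-change-free exact test at the samples.
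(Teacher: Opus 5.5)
Your proof is correct and follows the paper's argument. Count and ordering come from \cref{lem:continuation}, and you make explicit the connectedness and branch-patching step the paper leaves implicit; nodes are excluded because $p_t'(s^*)=\nabla F(k^*)\cdot n_t=0$ would force a double root; and the admission bits are constant by \cref{lem:flux}. The one variation is that you track the sign of the algebraic predicate $p_t'(s_i(t))\,e_2\bigl(M(k_i(t))\bigr)$ rather than the flux $\mathbf S_i(t)\cdot n_t$ itself. This avoids the paper's appeal to continuity of the polarization $E_i(t)$, and it is exactly the quantity \cref{alg:solver} evaluates.
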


\begin{proof}
The count and ordering of the real roots are constant by \cref{lem:continuation}.  Since $\disc(p_t)\neq0$, no root lies at a node of the surface: a line through a node $k^{*}$ has a double root there, because $p_t(s^{*})=F(k^{*})=0$ and $p_t'(s^{*})=\nabla F(k^{*})\cdot n=0$.  Each $k_i(t)$ is therefore a smooth point of the wave surface, where $\ker M(k_i)$ is one-dimensional~\cite{BornWolf1999}, so the polarization $E_i(t)$ and hence $\mathbf S_i(t)\cdot n_t$ vary continuously; by \cref{lem:flux} no flux vanishes at a simple root, so the vector of flux signs is locally constant, and the admitted subset is determined by this sign vector.
\end{proof}

During navigation the solver therefore tracks roots from frame to frame, seeding refinement with the previous roots while keeping certified isolating intervals, and falls back to fresh isolation with escalated precision whenever the event predicate approaches zero.  Interaction acquires an event-driven structure: smooth parameter motion, root continuation, algebraic event, branch reclassification.

\subsection{The main theorem}
\label{sec:main}

\begin{thm}[Main theorem]
\label{thm:main}
Let $\varepsilon=\diag(\varepsilon_1,\varepsilon_2,\varepsilon_3)$ with $0<\varepsilon_1<\varepsilon_2<\varepsilon_3$ be the relative permittivity of a transparent biaxial dielectric at frequency $\omega$, let $F_{\varepsilon,\omega}$ be its Fresnel polynomial~\eqref{eq:fresnel}, and let $m=(u,v)=(\varepsilon_1/\varepsilon_2,\varepsilon_3/\varepsilon_2)$ be the intrinsic coordinate~\eqref{eq:uv}, which lies in the open cell $u<1<v$ of $\Delta$.  Then:
\begin{enumerate}[label=\textup{(\roman*)},nosep]
\item \textup{(Complete invariant.)}  Two strictly biaxial permittivities have wave surfaces related by a rotation and a uniform rescaling of $k$ if and only if they have the same intrinsic coordinate $m$; the factored state $q=(m,\rho,[R],\lambda)$ of~\eqref{eq:state}, with orientation the class $[R]\in\SO(3)/D_2$ over the biaxial cell, therefore carries no representational redundancy.
\item \textup{(Exact singular geometry.)}  The real singular locus of $\{F_{\varepsilon,\omega}=0\}$ consists of exactly the four nodes~\eqref{eq:nodes}, given in closed form; the optic axes and their half-angle $\beta$ from the $k_3$-axis, $\tan^{2}\beta=v(1-u)/(u(v-1))$, hence the line angle $\theta_{\rm OA}=\min(2\beta,\pi-2\beta)$, are exact functions of $m$.
\item \textup{(Certified refraction.)}  For every nonzero interface normal $n$, oriented into the transmitting medium, and tangential wave vector $k_\parallel$, the candidate transmitted wave numbers are the roots of the explicit polynomial $p(s)$ of~\eqref{eq:ps}, which has degree exactly four.  If $\varepsilon$, $n$, $k_\parallel$, and $k_0$ have dyadic-rational entries---with no exact normalization of $n$ or exact orthogonality $k_\parallel\perp n$ required of them---then the number and ordering of the simple real roots of $p$, the presence of any repeated root, the node encounters, and the flux classification of every simple real root are decided by finitely many exact sign evaluations, and along any continuous parameter path on which $\disc(p_t)\neq0$ the real roots vary continuously with constant count and ordering.
\end{enumerate}
\end{thm}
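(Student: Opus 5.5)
The plan is to assemble the theorem from results already proved in this section and \cref{sec:fresnel}, treating each part as a corollary and adding only the glue needed for the redundancy claim in (i) and the exactness claim in (iii).

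For (i), the equivalence statement is exactly \cref{prop:chart}, so it remains to justify that the factored state $q=(m,\rho,[R],\lambda)$ carries no redundancy. I would argue as follows. Given $\varepsilon$ biaxial, the ordered eigenvalues recover $(m,\rho)$ via $\varepsilon_1=u\rho$, $\varepsilon_2=\rho$, $\varepsilon_3=v\rho$. The eigenframe is determined up to the sign flips in $D_2$, so the class $[R]\in\SO(3)/D_2$ is well defined. Conversely, $R\,\diag(u\rho,\rho,v\rho)R^\top$ depends only on $[R]$, because $D_2$ commutes with diagonal matrices. Hence $\varepsilon\mapsto(m,\rho,[R])$ is a bijection onto $\{u<1<v\}\times\R_{>0}\times\SO(3)/D_2$, and $\lambda$ is an independent coordinate. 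Combined with \cref{prop:chart}, this shows that $m$ is precisely the shape invariant, while $\rho$ and $[R]$ are the retained scale and pose.

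For (ii), I would cite \cref{prop:nodes} directly. Dividing numerator and denominator of $\tan^2\beta=\varepsilon_3(\varepsilon_2-\varepsilon_1)/(\varepsilon_1(\varepsilon_3-\varepsilon_2))$ by $\varepsilon_2^2$ gives $v(1-u)/(u(v-1))$, which is finite and positive on the open cell. Therefore $\beta\in(0,\pi/2)$ and $\theta_{\rm OA}$ are determined by $m$.

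For (iii), the statements follow from earlier results. Explicitness and exact degree four come from \eqref{eq:ps} and \cref{lem:degree}. The count and ordering come from \cref{prop:certified}, once we note that with dyadic inputs all coefficients are rational, so Sturm/Descartes isolation on $p_{\rm sf}$ uses finitely many rational sign evaluations. Repeated roots are detected by the sign of $\disc(p)$, an integer polynomial in the inputs and hence exactly decidable. Node encounters are the real roots of $h=\gcd(p_{\rm sf},G_1,G_2,G_3)$, computed over $\Q$. The flux classification is given by \cref{lem:flux}. Continuation along paths with $\disc(p_t)\neq0$ is \cref{lem:continuation} together with \cref{cor:coherence}.

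The main obstacle is confirming that each decision genuinely terminates after finitely many exact sign evaluations, rather than relying on a refinement loop that might not halt. The delicate case is the flux sign at a simple root $s_0$. Here \cref{lem:flux} guarantees $p'(s_0)e_2(M(k))\neq0$, so the rational polynomial $q(s)=p'(s)e_2(M(k_\parallel+sn))$ is nonzero at the algebraic number $s_0$. I would make termination explicit by computing $\gcd(q,p_{\rm sf})$, which has no root in $I_i$, and then refining $I_i$ until a Sturm count shows that $q$ has no root in it. This bounds the work by the root separation of $p_{\rm sf}\cdot q$.

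One further point needs checking: conical nodes are excluded before this step. This holds because a node encounter forces a double root (see the proof of \cref{cor:coherence}), so such roots are flagged as events rather than classified.
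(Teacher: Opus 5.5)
Your proposal is correct and follows the paper's proof essentially step for step. Part (i) comes from \cref{prop:chart} together with the bijection $\varepsilon\leftrightarrow q$. Part (ii) comes from \cref{prop:nodes} by substituting $\varepsilon_1=u\varepsilon_2$ and $\varepsilon_3=v\varepsilon_2$. Part (iii) assembles \cref{lem:degree}, exact Sturm isolation on rational coefficients, \cref{prop:certified}, the gcd node test, \cref{lem:flux}, and \cref{lem:continuation}. Your added details are elaborations, not a different route: the commutation of $D_2$ with diagonal matrices, and the termination of the flux-sign refinement via $q(s_0)\neq0$ together with the fact that a node forces a double root.
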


\begin{proof}
(i) The first statement is \cref{prop:chart}.  The ordered eigenvalues are $(u\rho,\rho,v\rho)$, and the eigenframe is determined up to $D_2$ by \cref{sec:chart}.  Hence, at the fixed wavelength under consideration, $q$ determines $\varepsilon$, and $\varepsilon$ determines $q$.

(ii) The first statement is \cref{prop:nodes}.  Substituting $\varepsilon_1=u\varepsilon_2$ and $\varepsilon_3=v\varepsilon_2$ into its angle formula gives $\tan^{2}\beta=v(1-u)/(u(v-1))$.

(iii) The reduction is derived in \cref{sec:reduction}, and $\deg p=4$ by \cref{lem:degree}.  The coefficients of $p$ are integer polynomials in the input, so they are rational on dyadic-rational input.  Sturm's theorem counts the real roots of $p$ in any interval with rational endpoints by finitely many exact sign evaluations, and bisection yields isolating intervals in the order of the roots.  The rational number $\disc(p)$ vanishes if and only if $p$ has a repeated root.  By \cref{prop:certified} these decisions do not depend on the refinement.  A node encounter is a common real root of $p$ and the three cubics $G_j$, decided by the gcd; the flux classification of a simple root is the sign of $p'(s)\,e_2(M(k_\parallel+sn))$ at that root by \cref{lem:flux}, decided by refining its isolating interval.  The continuation statement is \cref{lem:continuation}, applied on a connected path.
\end{proof}

\section{System: Renderer and Linked Navigator}
\label{sec:system}

This section describes the design of the interactive system; the components evaluated in this paper are the exact solver and the learning audits, and rendered output is not evaluated (E5).  The design exposes one state~\eqref{eq:state} through four synchronized views: a material panel (principal permittivities, scale, orientation, wavelength model); the quotient navigator, which draws the intrinsic chart $\Delta$ with its uniaxial edges, isotropic corner, and contour lines of the optic-axis angle $\beta(u,v)$, and, in moduli mode, the ambient Kummer moduli in the coordinates~\eqref{eq:igusa-encoder} with the discriminant and symmetry strata marked; the wave-surface view, showing the quartic, its four nodes drawn directly from~\eqref{eq:nodes}, the optic axes, and the interface line $k(s)$ under inspection; and an appearance view supplied by a renderer.  Because all four views are functions of the same state, a drag in any one of them is a motion in the shape space, and the event structure of \cref{sec:solver} specifies the desired behavior: away from events, root continuation preserves temporal coherence (\cref{cor:coherence}); at an event, the affected branch is reclassified and the crossing can be announced in the navigator rather than surfacing as an unexplained flicker.

\begin{figure}[t]
\centering
\includegraphics[width=\linewidth]{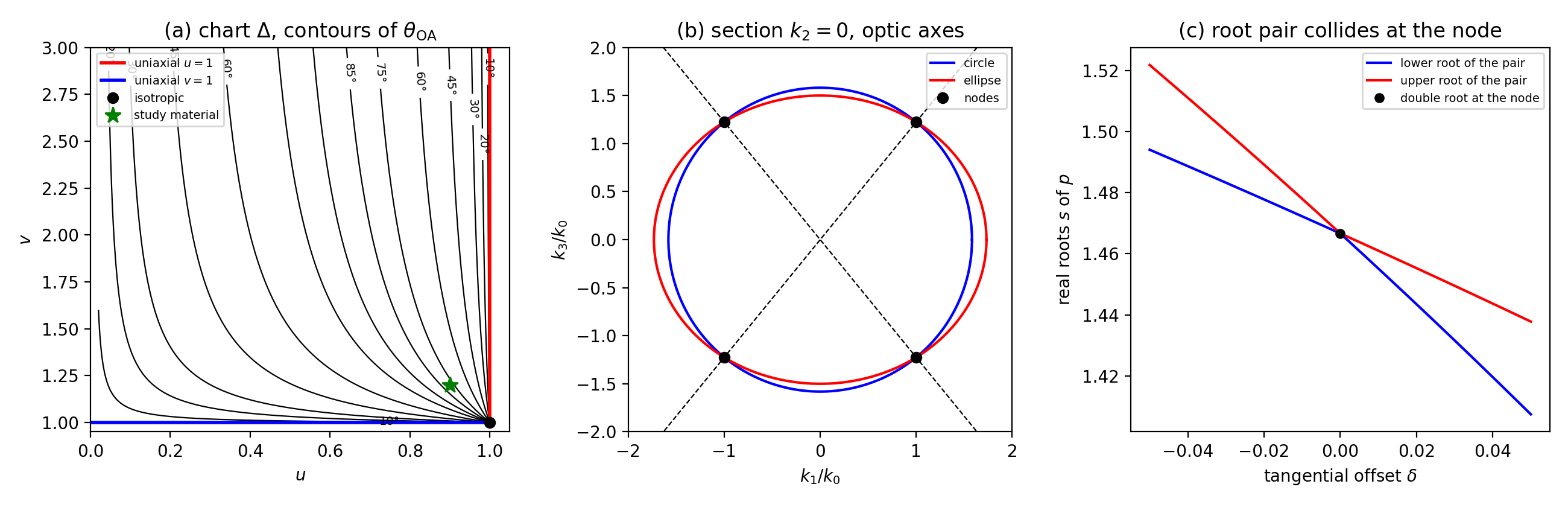}
\caption{The objects the prototype design links, computed from the formulas of \crefrange{sec:fresnel}{sec:solver}; (a) and (b) use the study material $(\varepsilon_1,\varepsilon_2,\varepsilon_3)=(9/4,5/2,3)$, (c) the rational-node material $(16/5,4,36/5)$ of the exact run in E3.  (a) The intrinsic chart $\Delta$ with its uniaxial strata, isotropic corner, and contours of the optic-axis line angle $\theta_{\rm OA}(u,v)$.  (b) The $k_2=0$ section of the wave surface: the circle and ellipse of \cref{prop:nodes} crossing at the four real nodes, with the optic axes.  (c) The transmitted branch of the interface quartic $p(s)$ as the ray is steered toward a node: the root pair collides, which is the event regime of \cref{sec:experiments}.}
\label{fig:pipeline}
\end{figure}

The intended renderer targets homogeneous, nonparticipating transparent crystals.  Interface quantities and polarization transport follow the established birefringence literature~\cite{WeidlichWilkie2008,Steinberg2019}; the biaxial directional solve is \cref{alg:solver}; spectral integration may use a conventional wavelength-sampled path tracer, since analytic spectral acceleration is orthogonal to the present contribution.  Canonical configurations associated with the two dual singular structures of \cref{sec:singular}---conoidal nodes and circular-contact tangent planes---provide natural conical-refraction test cases, with the wave-surface view displaying the marked structure in question.  Consistent with \cref{sec:embedding}, the design treats node events, material degenerations, and interface root events as three visually distinct classes.  Rendering output itself is outside the evaluation reported in E5.

\section{Learning on the Quotient}
\label{sec:learning}

The quotient supplies exact equivalence classes. A class is sampled by choosing a quotient point and applying random elements of $G_{\rm task}$. Membership in a class is then known by construction. We use this ground truth to measure what learned models preserve. The same construction applies to the Kummer family of \cref{sec:kummer}, with sextics and the $\GL_2$ action. We report experiments on the Fresnel family, where every quantity is available in closed form.

\begin{defn}[Invariance defect]
\label{defn:defect}
Let $h:\A\to\R$ be a learned map, let $a\in\A$, and let $g_1,\dots,g_K\in G_{\rm task}$. The \emph{invariance defect} of $h$ at $a$ is the standard deviation of $h(g_1\cdot a),\dots,h(g_K\cdot a)$.
\end{defn}

The defect of a map that factors through the quotient is zero. The defect of a map on raw parameters is an empirical quantity.

\subsection{Invariance audit of a learned regressor}
\label{sec:l1}

The parameter space is the set of strictly biaxial positive definite tensors $\varepsilon$ in world coordinates. The group $G_{\rm task}=\SO(3)\times\R_{>0}$ acts by $(R,t)\cdot\varepsilon=tR\varepsilon R^\top$. The target is the optic-axis line angle $\theta_{\rm OA}$. By \cref{prop:chart} it is a function of $m$ alone.

Quotient points are sampled uniformly in $[0.5,0.98]\times[1.02,2]$. Representatives use Haar-random $R$ and a scale $\rho$ drawn log-uniformly. Training scales lie in $[1,4]$. The raw encoding is the six upper-triangular entries of $\rho R\diag(u,1,v)R^\top$. The augmented raw encoding uses eight random representatives per training class instead of one. The quotient encoding is $(u,v)$. The model is a multilayer perceptron with two hidden layers of width $64$, standardized inputs, and early stopping. The test set consists of $500$ held-out classes. Each class has $16$ representatives with $\rho\in[1,4]$ and $16$ with $\rho\in[4,16]$. All values are means over three seeds. The standard deviation of $\theta_{\rm OA}$ over the sampling distribution is $16.3^\circ$.

\begin{table}[t]
\centering
\caption{Regression of $\theta_{\rm OA}$ in degrees. RMSE is the root-mean-square error. The defect is the mean over test classes of the invariance defect of \cref{defn:defect} with $K=16$. Means and standard deviations over three seeds.}
\label{tab:invariance}
\small
\setlength{\tabcolsep}{4pt}
\begin{tabular}{llcccc}
\toprule
 & & \multicolumn{2}{c}{$\rho\in[1,4]$} & \multicolumn{2}{c}{$\rho\in[4,16]$}\\
Encoding & Classes & RMSE & Defect & RMSE & Defect\\
\midrule
Raw & $200$ & $20.0\pm2.9$ & $10.1\pm3.7$ & $170.8\pm55.3$ & $91.1\pm28.2$\\
Raw, augmented $\times8$ & $200$ & $15.8\pm0.3$ & $2.7\pm0.4$ & $62.2\pm6.0$ & $34.7\pm3.4$\\
Quotient $(u,v)$ & $200$ & $4.7\pm0.9$ & $0$ & $4.7\pm0.9$ & $0$\\
\midrule
Raw & $2000$ & $15.8\pm0.3$ & $2.8\pm0.3$ & $49.1\pm8.6$ & $29.7\pm4.9$\\
Raw, augmented $\times8$ & $2000$ & $11.8\pm0.4$ & $6.2\pm0.1$ & $66.7\pm24.3$ & $45.1\pm14.1$\\
Quotient $(u,v)$ & $2000$ & $1.4\pm0.0$ & $0$ & $1.4\pm0.0$ & $0$\\
\midrule
Raw & $20000$ & $10.5\pm0.7$ & $6.1\pm0.2$ & $41.4\pm5.1$ & $30.5\pm2.8$\\
Quotient $(u,v)$ & $20000$ & $0.5\pm0.0$ & $0$ & $0.5\pm0.0$ & $0$\\
\bottomrule
\end{tabular}
\end{table}

\Cref{tab:invariance} reports the results. On raw tensors the regressor does not become invariant with more data. Its mean in-range defect is between $2.7^\circ$ and $10.1^\circ$ in all settings. Augmentation lowers the in-range error but does not consistently improve the out-of-range error or enforce invariance. Outside the training scales every raw model fails, with errors above $41^\circ$. The quotient regressor has zero defect by construction. Its error does not depend on the scale range. With $20000$ classes its error is $0.5^\circ$, a factor $21$ below the best raw model tested.  A constant predictor has error $16.3^\circ$, and the closed-form angle is the exact reference; the experiment audits invariance rather than the need to learn a known formula.

\subsection{Learned root-count predicates}
\label{sec:l2}

The second task is the root count of \cref{thm:main}(iii). The input is an interface configuration. The label is the number of distinct real roots of $p$, which is $0$, $2$, or $4$. We fix $n=e_3$ and $k_0=1$. Materials are sampled as in \cref{sec:l1} with $\rho\in[1,4]$. Half of the tangential vectors are uniform in the disk of radius $2$. The other half lie at distance $\delta$ from the tangential trace of a random real node, with $\delta$ log-uniform in $[10^{-7},10^{-1}]$. Inputs are read as dyadic rationals. Labels are computed exactly.  Write $p(s)=as^{4}+bs^{3}+cs^{2}+ds+e$ with $a>0$ and put $P=8ac-3b^{2},\qquad D=64a^{3}e-16a^{2}c^{2}+16ab^{2}c-16a^{2}bd-3b^{4}.$ If $\disc(p)<0$, then $p$ has exactly two distinct real roots.  If $\disc(p)>0$, then $p$ has four distinct real roots when $P<0$ and $D<0$, and no real roots otherwise~\cite{Rees1922}.  Inputs with $\disc(p)=0$ are discarded. On $300$ random inputs these labels agree with $60$-digit root finding. There are $40000$ training and $20000$ test configurations.  Hyperparameters, software versions, hardware, and code are given in Section~SM5 of the supplementary materials.

For fixed $n$ the group $G_{\rm task}$ consists of rotations about $n$ and the scalings $(\varepsilon,k_\parallel)\mapsto(t\varepsilon,\sqrt t\,k_\parallel)$. By the second identity in the proof of \cref{prop:chart}, both preserve the label. The raw encoding is the six entries of $\varepsilon$ and the two entries of $k_\parallel$. The invariant encoding rotates $k_\parallel$ onto the positive $k_1$-axis and divides by $\tau=\tr(\varepsilon)/3$; it is defined for $k_\parallel\neq0$, which holds with probability one under the sampling used. It consists of the six entries of the rotated $\varepsilon/\tau$ and of $\|k_\parallel\|/\sqrt\tau$. The models are a multilayer perceptron with two hidden layers of width $128$ and a histogram gradient-boosted tree ensemble.

\begin{table}[t]
\centering
\caption{Error rates in percent of learned root-count predicates. The last column contains the $523$ near-node test configurations whose exact count is not $4$. Means and standard deviations over three seeds.}
\label{tab:predicate}
\small
\setlength{\tabcolsep}{4pt}
\begin{tabular}{llcccc}
\toprule
Model & Encoding & All & Uniform & Near node & Near node, count $\neq4$\\
\midrule
Perceptron & Raw & $5.9\pm0.4$ & $6.1\pm0.6$ & $5.8\pm0.4$ & $87.8\pm2.5$\\
Perceptron & Invariant & $2.6\pm0.1$ & $1.2\pm0.1$ & $3.9\pm0.1$ & $73.3\pm1.4$\\
Boosted trees & Raw & $8.3\pm0.0$ & $10.6\pm0.0$ & $6.1\pm0.0$ & $89.8\pm0.6$\\
Boosted trees & Invariant & $3.2\pm0.1$ & $2.0\pm0.1$ & $4.3\pm0.0$ & $77.9\pm0.5$\\
\bottomrule
\end{tabular}
\end{table}

\Cref{tab:predicate} reports the results. The invariant encoding lowers the error on uniform configurations by a factor of five for both models. Near a node $95\%$ of the test configurations have count $4$. The remaining configurations are misclassified at rates between $73\%$ and $90\%$. On them the count is determined at the scale $\delta$ of the offset, and the tested models usually predict the majority count of the neighborhood. The exact predicate of \cref{alg:solver} defines the labels and has error zero on the represented inputs.

The two experiments separate two effects. The quotient removes nuisance variation and improves both accuracy and invariance. The event structure at the scale of the offset is not learned by either encoding. It is resolved by the exact predicate.

\section{Experiments}
\label{sec:experiments}

We evaluate the two halves of the construction separately: the quotient as a representation (E1), and the certified solver as a rendering component (E2--E5).

\subsection{E1: quotient geometry and solver difficulty}
We test the couplings between the material quotient and the interface solver, and we report the negatives together with the positives.

\emph{(A) Stratum distance does not predict generic-ray difficulty.}  Fixing $u=9/10$ and sweeping $v=1+d$ toward the uniaxial stratum, we draw $250$ random tangential wave vectors at a fixed interface normal for each material and measure the normalized discriminant $|\disc(p)|/\|c\|_\infty^{6}$ exactly, where $c$ is the coefficient vector of $p$; this quantity is invariant to rescaling the polynomial, not the variable, and it is a proximity measure to the event locus rather than a condition number.  The median decays like $d^{1.25}$ for $d\ge0.02$ and is flat ($d^{0.07}$) below (\cref{fig:coupling}, left); a single fit over all $d$ returns the misleading exponent $0.49$.  The fraction of rays with normalized discriminant below $10^{-12}$ is zero at all sampled $d$.  The structural reason is visible in \cref{lem:factorizations}: even exactly on the stratum the quartic factors into two quadrics whose four roots are distinct for a generic ray, so difficulty concentrates on a ray-localized set rather than spreading over the material.  The naive hypothesis ``distance to the stratum predicts difficulty'' is therefore refuted in bulk form.

\emph{(B) What does hold: localization, gauge invariance, and structured interpolation.}  First, the location and extent of the dangerous ray set are functions of the quotient point in closed form: the nodes~\eqref{eq:nodes} and the half-angle $\tan^2\beta=v(1-u)/(u(v-1))$ place the difficult directions exactly, and $\disc(p)$ tracks the approach to a node as $\delta^{2}$, the order proved in \cref{lem:order} and confirmed in E3; the quotient thus \emph{predicts where} difficulty lives, enabling precomputation of event geometry per material.

Second, redundancy removal in the strict sense---same physical state, different descriptions---is tested through the eigenframe gauge of \cref{sec:chart}: the eigendecompositions $(\Lambda,R)$ and $(\Lambda,RP)$ with $P\in D_2$ describe the same world tensor, since $R\Lambda R^\top=(RP)\Lambda(RP)^\top$ for the sign flips.  Linearly interpolating between two states in these redundant coordinates produces intermediate tensors that depend on which representative is chosen at each endpoint, so an arbitrary description choice leaks into the rendered result; the quotient encoding maps all representatives of a state to the same point by construction, and no downstream computation can see the choice.

Third, interpolation across \emph{orientations} probes something different, and we state it as such: the endpoints are genuinely distinct physical states sharing the intrinsic class $(u,v,\rho)$, and the comparison is between structure-preserving interpolation on the stratified state space and linear blending in world-tensor coordinates.  For orientations $70.5^\circ$ apart about the axis $[1,0,1]/\sqrt2$, the two normalized-discriminant curves are quantitatively distinct: across the sampled path, the raw-to-structured median ratio ranges from $0.298$ to $1.381$.  Both nevertheless remain more than eight orders of magnitude above the $10^{-12}$ event threshold (\cref{fig:coupling}, center).  The matrix path already fabricates intermediate materials: its eigenvalues drift from $(2.25,2.5,3)$ to $(2.30,2.54,2.91)$ and $\theta_{\rm OA}$ from $78.5^\circ$ to $85.0^\circ$ at $t=1/2$.  Blending across a $90^\circ$ rotation about $k_3$ exhibits the failure mode fully: along the matrix path the eigenvalues mix and the optic-axis line angle collapses from $\theta_{\rm OA}=78.5^\circ$ to $0^\circ$ as the path passes through the exactly uniaxial tensor $\diag(2.375,2.375,3)$ at $t=1/2$, then recovers, whereas the structured path holds $(u,v,\rho)$ fixed and transports the orientation class, so $\theta_{\rm OA}$ is constant by construction (\cref{fig:coupling}, right).  Thus, whether matrix blending is mildly quantitative or class-destroying depends on the rotation path, while the structured path preserves the material class uniformly.  The fabricated intermediate materials are physically different crystals---wrong birefringence, coincident optic axes---so the artifact is visual, not merely numerical.

\begin{figure}[t]
\centering
\includegraphics[width=\linewidth]{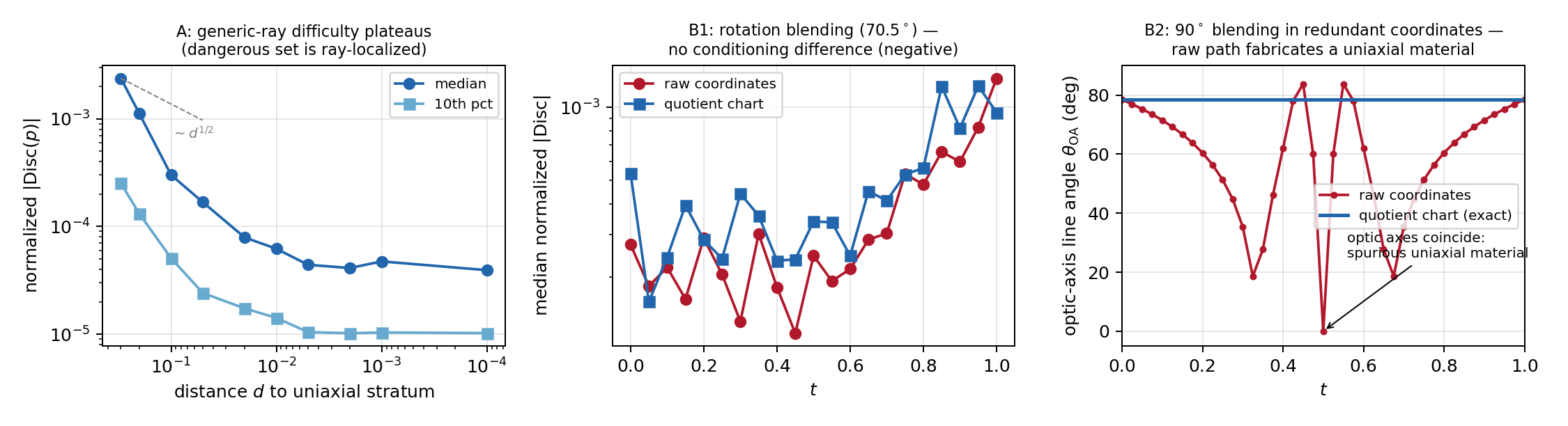}
\caption{Coupling between quotient geometry and the solver.  Left: generic-ray difficulty versus distance to the uniaxial stratum decays as $d^{1.25}$ for $d\ge0.02$ and is flat below---bulk stratum distance does not predict difficulty.  Center: orientation blending about $[1,0,1]/\sqrt2$ produces distinct normalized-discriminant curves, but both remain far from the event threshold.  Right: interpolating between two orientations of the same material class, linear blending of world tensors fabricates a spurious uniaxial material (optic axes coincide, $\theta_{\rm OA}\to0$) while the structured path on the stratified state space holds the material class exactly.}
\label{fig:coupling}
\end{figure}

\subsection{E2: solver correctness}
The correctness checks are reported in Section~SM7 of the supplementary materials.

\subsection{E3: robustness near degeneracies}
We report a controlled study at the most delicate regime: an interface line driven toward an optic-axis node.  The crystal has principal permittivities $(\varepsilon_1,\varepsilon_2,\varepsilon_3)=(9/4,\,5/2,\,3)$, rotated by a rational orthogonal matrix so that the interface quartic is generic; the node is the closed-form point of~\eqref{eq:nodes}.  For each offset $\delta\in[10^{-15},10^{-1}]$ we draw $200$ random tangential directions from one seeded generator and place $k_\parallel$ at distance $\delta$ from the node's tangential trace; every input is stored as the exact dyadic rational represented by its IEEE value, so that the exact number of real roots (a Sturm count) and the exact sign of $\disc(p)$ are available as ground truth for the represented state.  The rotated tensor and its matrix $\Gamma_\varepsilon$ are formed in exact rational arithmetic; only $k_\parallel$ is rounded.  The node itself is irrational and its tangential trace is stored as the binary64 point nearest to it, within $2\cdot10^{-16}$; this is the represented state a renderer has, and it is the only representation error in the run.  A separate exact run uses the material $(\varepsilon_1,\varepsilon_2,\varepsilon_3)=(16/5,4,36/5)$, whose node $(6/5,0,8/5)$ is rational, so that the node's trace is exact and configurations can be placed exactly on the event locus.  These inputs are rational but not dyadic, which the solver accepts.  This run uses the same $200$ directions for every $\delta$.  A dyadic fixture with an exact node is $\varepsilon=\diag(5,25/4,45/4)$, $k^{*}=(3/2,0,2)$, $n=(1,0,1)$, $k_\parallel=(-1/4,0,1/4)$, for which $p(s)=\tfrac{5}{512}(4s-7)^{2}(4s+7)(52s+101)$.  Three solvers are compared: a floating-point quartic solve (companion-matrix roots with an imaginary-part tolerance), an iterative Newton solve seeded on a grid with root deduplication (a stand-in for unstructured nonlinear solving), and the certified solver of \cref{alg:solver}.

Along every sampled direction the ground truth near this node is four real roots, which Poynting-flux classification resolves into exactly two forward (transmitted) and two backward modes in all sampled configurations; the complex-pair branch of \cref{lem:order} ($D(\bar x)<0$) does not occur for the sampled geometry, in which $n$ is the interface normal and the offset is tangential, and the colliding pair is the forward pair in every configuration.  Ordering the singular values of $M(k_i)$ as $\sigma_1\ge\sigma_2\ge\sigma_3$, the polarization diagnostic $\sigma_2/\sigma_1$ at the near pair is proportional to $\delta$, with constant between $0.15$ and $0.23$ over the sampled directions.  All observed baseline failures are \emph{undercounts}, by two different mechanisms.  The companion-matrix solve returns the colliding real pair as a complex pair and drops it: in every observed failure it returns two roots, both backward, so both forward transmitted modes are lost.  The Newton solve stays real but merges the pair, since its deduplication tolerance $10^{-7}$ exceeds the root gap for $\delta\le10^{-7}$: in every observed failure it returns three roots with one forward mode, so one of the two forward modes is lost.  At $\delta=10^{-9}$ these counts are $106$ of $200$ and $196$ of $200$ respectively.  \Cref{tab:robustness} and \cref{fig:robust} summarize the outcome.  The baselines return the correct count in all sampled cases for $\delta\ge10^{-6}$, and silent root-count errors switch on at $\delta=10^{-7}$, where the exact root gap is approximately $0.22\,\delta$ over the central asymptotic range and reaches the accuracy floor of a nearly double root in the companion-matrix solve ($\approx10^{-8}$, consistent with square-root error amplification, not with float64 resolution itself).  The certified solver returns the exact count throughout.  For $10^{-13}\le\delta\le10^{-2}$, $\min_\varphi|\disc(p)|/\delta^{2}$ remains within $0.13\%$ of $1003.5$; over the full sampled sweep the discriminant decreases by more than twenty-six orders of magnitude, with $\disc(p)>0$ in every configuration as \cref{lem:order} requires.  At $\delta=10^{-14}$ and $10^{-15}$ the ratio falls by $1.2\%$ and $12\%$; this is the binary64 representation error of the node, which is a fifth of the offset at $\delta=10^{-15}$.  The exact rational-node sweep removes that floor: $\min_\varphi|\disc(p)|/\delta^2$ is constant to four significant digits for $10^{-15}\le\delta\le10^{-3}$.  Placing $k_\parallel$ on the exact node's trace exercises the event branch of \cref{alg:solver}: the solver returns $\disc(p)=0$, the double root exactly with multiplicity two, and two simple roots.  At $k^*$ the largest singular value is $6.4$ and the other two are below $10^{-15}$, at the level of binary64 round-off, so the two-dimensional kernel identifies the configuration as a conical node (\cref{fig:pipeline}(c)).

\begin{table}[t]
\centering
\caption{Silent root-count error rates near an optic axis for the study material ($200$ seeded directions per $\delta$; exact ground truth on the represented dyadic-rational inputs; solver constants in Section~SM6 of the supplementary materials).  Flux classification shows every missed root is a forward transmitted mode.}
\label{tab:robustness}
\begin{tabular}{lccc}
\toprule
 & $\delta=10^{-7}$ & $\delta=10^{-9}$ & $\delta=10^{-12}$\\
\midrule
Newton iteration (seeded, deduplicated) & $100.0\%$ & $96.0\%$ & $95.5\%$\\
Float64 quartic roots, tol $10^{-10}$ & $33.0\%$ & $48.5\%$ & $53.0\%$\\
Float64 quartic roots, tol $10^{-8}$ & $13.5\%$ & $25.5\%$ & $27.0\%$\\
Certified (exact Sturm + discriminant) & $0\%$ & $0\%$ & $0\%$\\
\bottomrule
\end{tabular}
\end{table}

\begin{figure}[t]
\centering
\includegraphics[width=\linewidth]{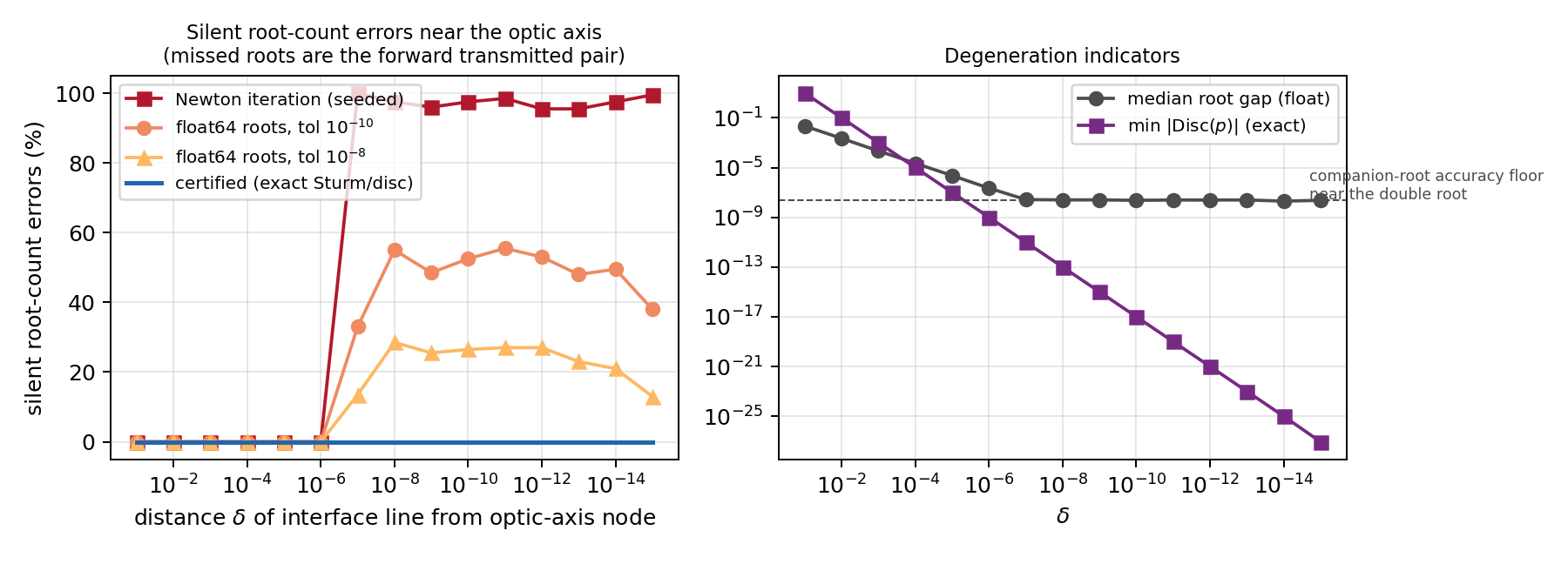}
\caption{Left: silent root-count error rates as the interface line approaches the node (certified solver: zero throughout); the missed roots are the forward transmitted pair.  Right: for the study material, the exact discriminant follows the predicted $\delta^{2}$ law; the plotted root gap is the floating-point one, and the exact gap, computed separately, is approximately $0.22\,\delta$ over the central asymptotic range; at the smallest offsets, representation error in the stored node becomes visible.  The floating-point root gap saturates at the accuracy floor of the nearly double root.}
\label{fig:robust}
\end{figure}

\subsection{E4: runtime}
The runtime measurements are reported in Section~SM8 of the supplementary materials.

\subsection{E5: scope}
The experiments above evaluate the quotient and the interface solve.  Rendered images are not evaluated in this paper.  A natural test material is aragonite at $589\,$nm, with principal indices $(1.530,\,1.681,\,1.686)$, hence $\varepsilon\approx(2.341,\,2.826,\,2.843)$, $(u,v)\approx(0.828,1.006)$, and optic-axis line angle $\theta_{\rm OA}\approx19^\circ$; it is the crystal of Lloyd's laboratory demonstration of conical refraction.

\section{Conclusion}
\label{sec:conclusion}

When an equivalence, a singularity, or an event structure is known exactly, it should be encoded before a numerical or learned method is asked to rediscover it.  We have developed this principle into a representation---canonical algebraic shape spaces with task-aware quotients, invariant coordinates, and explicit strata---and tested its mathematical and computational core on a visual-computing case study in which the geometry and physics are both nontrivial.  Kummer quartics supply the exact quotient; Fresnel wave surfaces supply the interface problem, and \cref{thm:kummer} identifies each wave surface with the Kummer surface of an explicit split Jacobian; \cref{thm:main} delivers the chart as a complete invariant, the singular geometry in closed form, and refraction as a certified univariate quartic whose discrete decisions are consequences of the configuration rather than hopes about convergence.

What the combination establishes, as measured in E1 and E3, is a refined computational pattern.  Approaching a uniaxial stratum does not uniformly drive the generic-ray discriminant toward zero---distance to a material stratum does not predict proximity to the multiple-root locus---but it predicts \emph{where} the delicate configurations live, since the nodes and the line angle $\theta_{\rm OA}$ are closed-form functions of the quotient point, and at what order events are approached (\cref{lem:order}); it removes representation-induced artifacts exactly, since equivalent descriptions can no longer produce different results, in interpolation or otherwise; and certified predicates turn the predicted event loci from silent failure modes into detected events.  The interface problem makes the pattern measurable; the framework makes it portable to any family algebraic enough to expose invariants, representative reconstruction, and a discriminant---rational curves and surfaces, algebraic CAD primitives, dispersion surfaces beyond the dielectric case, caustic and focal families, and hybrid algebraic--neural representations.

The learning experiments of \cref{sec:learning} measure the same pattern for learned models.  Quotient coordinates make a regressor invariant by construction and reduce its error by a factor of $21$ relative to raw tensors.  Learned root-count predicates misclassify most near-node configurations whose count differs from the local majority, and the exact predicate resolves them.  The same audits apply to the Kummer family, with sextics, the $\GL_2$ action, and the classifiers of~\cite{ShaskaShaska2025}.

The limitations are equally definite.  The construction applies to families with usable invariants and reconstruction, not to shape categories at large; the full Kummer quotient is singular and is handled as a stratified object, not a Euclidean latent; and the physical model is the clean homogeneous transparent dielectric---dispersion enters only through wavelength sampling, and absorption, optical activity, and spatial inhomogeneity are outside the present model.  Certification has a price: exact isolation and adaptive precision can lose to a well-behaved floating-point solve in generic configurations, and its advantage concentrates near branch events and in applications where silent mode misclassification is unacceptable.  E4 quantifies this balance for the prototype; renderer-integrated timings remain to be measured.

\bibliographystyle{amsplain}
\bibliography{sh-180}

\end{document}